\documentclass{article}
\usepackage{iclr2026_conference,times}

\usepackage{algcompatible}

\usepackage[utf8]{inputenc} %
\usepackage[T1]{fontenc}    %
\usepackage{amsmath}
\usepackage{amsfonts}       %
\usepackage{url}            %
\usepackage{booktabs}       %
\usepackage{nicefrac}       %
\usepackage{microtype}      %
\usepackage{xcolor}         %
\usepackage{bm}
\usepackage{diagbox}
\usepackage{oplotsymbl}
\usepackage{pgfplots}
\usepackage{wrapfig}
\usepackage{multirow}
\usepackage{tablefootnote}
\usepackage{makecell}
\usepackage{bbm}
\usetikzlibrary{patterns}

\usepackage[export]{adjustbox} %
\usepackage{tikz}
\usepackage{pifont} %
\usepackage{circledsteps} %
\newcommand{\mycirc}[1]{\Circled[/csteps/fill color=black,/csteps/inner color=white]{\sffamily \small #1} }

\usepackage[ruled,vlined]{algorithm2e}

\SetKwInput{KwInput}{Input}                %
\SetKwInput{KwOutput}{Output}              %

\usepackage{xparse}
\usepackage{xspace}
\usepackage{fixltx2e}
\usepackage{tikz}
\usepackage{float}
\usepackage{multirow}
\usepackage{multicol}
\usepackage{colortbl}
\usepackage{array}
\usepackage{tcolorbox}
\newcommand{\PreserveBackslash}[1]{\let\temp=\\#1\let\\=\temp}
\newcolumntype{C}[1]{>{\PreserveBackslash\centering}p{#1}}
\newcolumntype{R}[1]{>{\PreserveBackslash\raggedleft}p{#1}}
\newcolumntype{L}[1]{>{\PreserveBackslash\raggedright}p{#1}}

\usepackage{microtype}
\usepackage{graphicx}
\usepackage{mathtools}
\usepackage{float}
\usepackage{subcaption}
\usepackage{booktabs} %
\usepackage[shortlabels]{enumitem}
\usepackage{wrapfig}

\usepackage{amssymb}%
\usepackage{pifont}%

\usepackage{bbold}

\usepackage{hyperref,amssymb,enumitem}

\setlist[itemize]{leftmargin=*}
\setlist[enumerate]{leftmargin=*}

\newcommand{\cmark}{\ding{51}}%

\newcommand*{\rej}{{\ooalign{\lower.3ex\hbox{$\sqcup$}\cr\raise.4ex\hbox{$\sqcap$}}}}

\newenvironment{proof}{\emph{Proof:}}{\hfill$\square$}

\usepackage{stackengine}

\usepackage{xcolor}

\newcommand{\ie}{\textit{i.e.,}\@\xspace}
\newcommand{\eg}{\textit{e.g.,}\@\xspace}

\newtheorem{definition}{Definition}
\newtheorem{theorem}{Theorem}

\newtheorem{lemma}{Lemma}

\graphicspath{{images/}}

\usepackage{booktabs,arydshln}
\makeatletter
\def\adl@drawiv#1#2#3{%
        \hskip.5\tabcolsep
        \xleaders#3{#2.5\@tempdimb #1{1}#2.5\@tempdimb}%
                #2\z@ plus1fil minus1fil\relax
        \hskip.5\tabcolsep}
\newcommand{\cdashlinelr}[1]{%
  \noalign{\vskip\aboverulesep
           \global\let\@dashdrawstore\adl@draw
           \global\let\adl@draw\adl@drawiv}
  \cdashline{#1}
  \noalign{\global\let\adl@draw\@dashdrawstore
           \vskip\belowrulesep}}
\makeatother

\usepackage{zref-clever}
\newcommand{\cref}[1]{\zcref{#1}}
\newcommand{\Cref}[1]{\zcref[S]{#1}}

\newcommand{\nlp}[1]{}

\newcolumntype{x}[1]{>{\centering\arraybackslash\hspace{0pt}}p{#1}}

\newcommand{\Aug}{\text{Aug}}

\newcommand{\alignexp}[2]%
{\underset{ {#2}', {#2}'' \sim \Aug(x) }{\mathbb{E}} [ d\left({#1}({#2}'), {#1}({#2}'')\right) ] }

\newtcolorbox[auto counter, number within=section]{chatbox}[1][]{
  colback=gray!10,
  colframe=black!50,
  sharp corners=south,
  rounded corners,
  boxrule=0.5pt,
  width=\textwidth,
  arc=4mm,
  left=1mm, right=1mm, top=1mm, bottom=1mm,
  fontupper=\sffamily,
  title=Example~\thetcbcounter, #1

}

\newcommand{\NID}{NID\@\xspace}
\newcommand{\NIDs}{NIDs\@\xspace}

\newcommand{\naturalIDs}{\textit{natural identifiers}\@\xspace}
\newcommand{\NaturalID}{Natural Identifier\@\xspace}
\newcommand{\NaturalIDs}{Natural Identifiers\@\xspace}

\newcommand{\GID}{GID\@\xspace}
\newcommand{\GIDs}{GIDs\@\xspace}

\newcommand{\GeneratedIDs}{Generated Identifiers\@\xspace}

\newcommand{\ID}{ID\@\xspace}
\newcommand{\IDs}{IDs\@\xspace}

\newcommand{\identifier}{\textit{identifier}\@\xspace}

\usepackage{amsmath,amsfonts,bm}

\def\eqref#1{equation~\ref{#1}}

\def\1{\bm{1}}

\DeclareMathAlphabet{\mathsfit}{\encodingdefault}{\sfdefault}{m}{sl}
\SetMathAlphabet{\mathsfit}{bold}{\encodingdefault}{\sfdefault}{bx}{n}

\renewcommand{\paragraph}[1]{\noindent\textbf{#1}~~}
\newcommand{\reb}[1]{{\textcolor{black}{#1}}}

\usepackage{hyperref}       %
\newcommand{\ourtitle}{Natural Identifiers for Privacy and Data\\Audits in Large Language Models}
\title{\ourtitle}

\author{Lorenzo Rossi, Bartłomiej Marek, Franziska Boenisch, Adam Dziedzic\thanks{For correspondence, please contact Franziska Boenisch (\texttt{boenisch@cispa.de}) and Adam Dziedzic (\texttt{dziedzic@cispa.de}).}
 \\
CISPA Helmholtz Center for Information Security \\
}

\iclrfinalcopy
\begin{document}

\maketitle

\begin{abstract}

Assessing the privacy of large language models (LLMs) presents significant challenges. In particular, most existing methods for auditing \emph{differential privacy} require the insertion of specially crafted canary data \emph{during training}, making them impractical for auditing already-trained models without costly retraining. Additionally, \emph{dataset inference}, which audits whether a suspect dataset was used to train a model, is \textit{infeasible} without access to a private non-member held-out dataset. Yet, such held-out datasets are often unavailable or difficult to construct for real-world cases since they have to be from the same distribution (IID) as the suspect data. These limitations severely hinder the ability to conduct scalable, \emph{post-hoc} audits. 
To enable such audits, this work introduces \textbf{natural identifiers (NIDs)} as a novel solution to the above-mentioned challenges. NIDs are structured random strings, such as cryptographic hashes and shortened URLs, naturally occurring in common LLM training datasets. Their format enables the generation of unlimited additional random strings from the same distribution, which can act as alternative canaries for audits and as same-distribution held-out data for dataset inference. Our evaluation highlights that indeed, using NIDs, we can facilitate post-hoc differential privacy auditing \textit{without any retraining} and enable dataset inference for any suspect dataset containing NIDs without the need for a private non-member held-out dataset.

\end{abstract}

\section{Introduction} \label{sec:intro}

Large Language Models (LLMs) are increasingly used in applications like chatbots and text generation, where they are often trained on sensitive data, such as private conversations.
Since LLMs have been shown to leak information about the training data~\citep{carlini2019SecretSharer,carlini2021extracting,duan2024membership,mattern2023membershipLLM}, we need auditing methods to evaluate and quantify their privacy risks, ensuring safe deployment.
Overall, there are two broad families of audits.
\textit{Formal} audits, \eg~\citep{jagielski2020auditing,nasr2023tight, panda2025privacy,steinke2023privacy}, aim to empirically verify claimed theoretical privacy guarantees of models trained with differential privacy (DP)~\citep{dwork2006calibrating}.
\textit{Standard} empirical privacy audits extend to models trained without privacy protection in mind and aim to understand the general leakage of individual training data points~\citep{carlini2022membership,duan2024membership,shokri2017membershipinference}, or, in the case of dataset inference (DI)~\citep{dziedzic2022datasetinferenceSSL,maini2021dataset,maini2024llm}, ask the question whether an entire data subset was used to train the model.

Unfortunately, both types of audits experience significant limitations in LLMs.
One key limitation of the formal privacy auditing methods is that they require inserting canary data \emph{during training}.
As a result, these methods are inapplicable to pretrained LLMs without retraining, which is typically infeasible due to its high cost.
Additionally, both types of audits rely internally on membership inference attacks (MIAs)~\citep{shokri2017membershipinference}, where an adversary attempts to determine whether a particular data point was part of the model's training set.
To be successful, MIAs require non-member held-out data from the exact same distribution as the member data used during training~\citep{duan2024membership,maini2024llm,mattern2023membershipLLM,shi2024detecting}.
In practice, this data is usually hard to obtain, limiting the applicability of MIAs for audits.
This limitation also equally affects DI, which assumes access to a held-out validation set that matches the distribution of the training data.
Currently, the only widely used validation sets originate from the Pile~\citep{gao2020pile}, which is used in the training of Pythia models~\citep{biderman2023pythia}, and to a lesser extent, the Dolma dataset~\citep{dolma}, used in training the OLMo models~\citep{groeneveld2024olmo}. %

We identify \naturalIDs (\NIDs) as a solution to all the above-mentioned problems.
\NIDs are structured random strings, generated according to some well-defined criteria, such as outputs from secure hash algorithms (\eg MD5 or SHA-1), shortened URLs, or cryptocurrency wallet addresses. 
We observe that these strings are naturally included in datasets, such as discussion platforms (\eg StackExchange) and code repositories (\eg GitHub) that are used as part of the training corpora for state-of-the-art LLMs.\footnote{Indeed, we observe that the publicly available datasets used to train popular LLMs, such as the Pile~\citep{gao2020pile} or Dolma~\citep{dolma}, contain 30637 and 23571 different types of \NIDs, respectively---showcasing the practical availability of \NIDs. The large number of \NID-types and new types constantly emerging makes it impossible to omit them through the web crawlers, thus \NIDs are less prone to being excluded from the LLMs' training set.} Especially code repositories are relevant for training powerful LLMs~\citep{hui2024qwen2,roziere2023code} as, beyond supporting code generation, they also strengthen broader capabilities such as logical reasoning, problem solving, and world knowledge~\citep{aryabumi2025to,petty2025how,kim2024code,hayase2024data} which are important for LLMs' performance.
\textbf{Our unique insight is that each of the popular \NIDs has a known generation function that we can leverage to generate an \textit{unlimited} number of held-out (non-member) data points from the same distribution as the \NIDs, which are naturally included in real-world suspect sets.}

Equipped with these insights, we show how to leverage \NIDs to perform formal post-hoc privacy auditing for LLMs. We build on the currently fastest single training run auditing approach~\citep{steinke2023privacy}, which needs to include dedicated canaries prior to training. We demonstrate that when \NIDs naturally occur in the training set, we can construct their corresponding auditing set \textit{post-hoc} from the same distribution and retroactively assess the privacy guarantees of any LLM without the requirement of expensive retraining from scratch. 
Our privacy auditing with \NIDs improves the lower bounds on the privacy parameters of an algorithm compared to the auditing framework by~\citet{steinke2023privacy}.
It also significantly reduces the sample complexity, \ie it requires fewer \NID canaries. 
Finally, in contrast to \reb{the one training run privacy auditing by}~\citet{steinke2023privacy}, \reb{our method} enables truly zero-run (\textit{post-hoc}) audits of already pretrained LLMs.

Beyond formal audits, \NIDs also make DI practically applicable, as one only has to identify \NID types in the data subset that is suspected to be included in an LLM's training data, generate a held-out set consisting of \NIDs of the same type, \ie from the same distribution, and then to perform the DI procedure~\citep{maini2024llm}. 
\reb{Thus, our fully post-hoc approach leverages \NIDs to perform DI without any modifications to the training data, which contrasts with the prior approach by \citet{zhang2024membership} that requires injecting random canaries into the pretraining dataset.}
We empirically validate our approach in a controlled environment, using open-source LLMs and their known training data. Specifically, we use the Pythia suite of models with the Pile dataset and the OLMo model with the Dolma dataset. Our results show that we can accurately infer training membership across diverse data subsets without false positives, suggesting that our approach may be useful in real-world litigations~\citep{coulter2024aiming}.

In summary, we make the following contributions:
\begin{enumerate}
\item We propose \NIDs as a practical and scalable solution to a key challenge in LLM privacy research: conducting \emph{post-hoc privacy audits} in real-world settings without requiring model retraining or access to a dedicated held-out set.
\item We adapt the one-run DP auditing framework~\citep{steinke2023privacy} to leverage \NIDs, enabling truly post-hoc DP auditing of pretrained LLMs without modifying the training process and achieving tighter lower bounds.
\item We make DI more practical by creating the necessary held-out set post-hoc using the \NIDs present in the suspect set and improving its efficiency by introducing a novel ranking-based test.
\item We conduct extensive empirical evaluations, demonstrating the effectiveness of our \NIDs for post-hoc privacy assessment over multiple LLM families and training datasets.
\end{enumerate}

\section{Background}
\label{sec:background}

\textbf{Differential Privacy (DP).} DP~\citep{dwork2006calibrating} is a framework that limits privacy leakage by ensuring no individual's data significantly alters the outcome of a computation. A randomized mechanism $M$ satisfies $(\varepsilon,\delta)$-DP if, for any two inputs $x$ and $x'$ differing by one record and any measurable set $S$, the following holds, where $\varepsilon$ bounds leakage and $\delta$ is the failure probability:
\[
P[M(x) \in S] \le e^{\varepsilon} P[M(x') \in S] + \delta.
\]
In this work we adopt the \emph{under replacement} adjacency, where two datasets are considered neighbors if they differ only in the replacement of one candidate element (rather than by addition or removal).

\textbf{Auditing DP.} The goal of DP audits is to empirically estimate a lower bound on the privacy parameters $\varepsilon$ and $\delta$ post-training. These audits help evaluate the tightness of the theoretical analysis~\citep{jagielski2020auditing,nasr2023tight} and can also reveal errors in the mathematical analysis or flaws in the algorithm's implementation~\citep{tramer2022debugging}. Privacy auditing generally relies on retraining models and inserting canaries during training~\citep{jagielski2020auditing,nasr2023tight,steinke2023privacy,mahloujifar2024auditing}. 
While \citet{steinke2023privacy} reduce computational costs with a privacy auditing technique that only requires a single training run, for LLMs with trillions of parameters, even this can be prohibitively expensive. We build on their approach and leverage \NIDs to remove the need for retraining altogether.

\textbf{Membership Inference Attacks (MIAs).} MIAs~\citep{shokri2017membershipinference} aim to determine whether a specific data point was included in a model's training set. They have diverse applications, and in this work, we focus on their use for privacy auditing~\citep{steinke2023privacy}. While MIAs have been extensively explored for small-scale models, MIAs for LLMs are a much more challenging problem. 
The latest work~\citep{duan2024membership, maini2024llm, zhang2024membership} indicates that the success reported by previous MIAs on LLMs~\citep{mattern2023membershipLLM,shi2024detecting} is rather due to a distribution shift than to the attacks' ability to distinguish between the member and non-member data points.
A prominent example is the temporal distribution shift that occurs when data before a specific cutoff date is selected as members and data after the point is treated as non-members, resulting in differences in language, wording, or formatting styles. %
When evaluated in the correct setting without distribution shift, \citet{maini2024llm} showed that most attacks do not outperform random guessing.

\textbf{Dataset Inference (DI).} DI~\citep{maini2021dataset} aims to resolve whether a given suspect dataset was used to train a model. %
While initially proposed for model ownership resolution~\citep{maini2024llm,dziedzic2022datasetinferenceSSL}, DI was recently extended to identify training data in LLMs~\citep{maini2024llm,zhao2025unlocking}. 
\reb{Beyond LLMs, DI has also been successfully applied to other types of generative models, including Diffusion Models~\citep{dubinski2025cdi} and Image Autoregressive Models~\citep{kowalczukprivacy}.}
In general, DI extracts diverse training membership features for the individual data points in the suspect set using various MIAs, aggregates them, and applies statistical testing to reliably determine whether the suspect set was used to train the model.

\textbf{Limitations of DI.} DI's major limitation is that the method relies on access to a \textit{private held-out set from the same distribution as a suspect set}.
Prior work~\citep{zhang2024membership} argues that this makes DI inapplicable for real-world use-cases where such data is usually not available.
As a solution, \citet{zhang2024membership} propose to inject random and meaningless canaries into the data and then test how the LLM ranks the selected canary among all alternatives. Since they assume access to the generator of the random canaries, they can provide the corresponding validation data points and avoid distribution shifts. 
The approach's reliance on inserted random strings reduces its practical applicability, as content creators would have to artificially include such specialized strings in their datasets and hide them from human readers.
Additionally, web crawlers can be trained to omit such arbitrary context-free strings when scraping the data from the internet, reducing the likelihood of this data being included in LLMs' training data.
Finally, this solution does not work for existing LLMs that were trained without the use of injected canaries.
In contrast, our observation is that we can leverage \NIDs that are naturally included in LLMs' training sets, mitigating the need to insert purely random strings and enabling auditing of existing pretrained LLMs without retraining.
As an alternative solution to overcome DI's reliance on an IID held-out set, \citet{zhao2025unlocking} proposed generating a synthetic held-out dataset by training a suffix-based generator on the suspect set, followed by a post-hoc calibration to reduce the distributional gap between the real and synthetic data. However, this approach is computationally expensive, requiring extensive training and calibration, and it still results in a residual distributional shift between real and synthetic datasets. In contrast, our generated held-out set based on \NIDs is from the exact same distribution as the suspect set.

\section{\NaturalIDs (\NIDs)}

We introduce \NIDs, explore their natural occurrence, and provide the intuition on how they address key challenges in LLM privacy research. We then present the notation and formalization of \NIDs, which will serve as the foundation for the subsequent sections.

\subsection{\NIDs in the Wild}

Conceptually, \NIDs are structured random strings, generated according to some well-defined functions. Prominent examples include outputs from secure hash algorithms (\eg MD5 or SHA-1, SHA-256), shortened URLs, or cryptocurrency wallet addresses. Additionally, new types of \NIDs, \eg produced through novel URL shortening approaches, are emerging continuously.
Such strings are omnipresent on the internet, for example, in code repositories (\eg GitHub) and discussion platforms (\eg StackExchange). 
Since large parts of the data used to pretrain state-of-the-art LLMs are crawled from the internet, these \NIDs get naturally included in the LLMs' training sets.
We carefully extract the \NIDs, as described in \Cref{app:blind_baseline}. 

\reb{While LLM providers may attempt to filter out natural \NIDs during data crawling, auditors hold a structural advantage in this setting~\citep{honig2024adversarial,radiyadata}. Removing all natural \NIDs is exceptionally challenging: even corpora with aggressive regex-based cleaning, URL canonicalization, PII filtering, and multistage deduplication, such as Dolma, still contain tens of thousands of distinct \NID types, as detailed in \Cref{tab:nids_in_data} (\Cref{app:nids_in_data}).
For our approach, an auditor only needs to identify a small subset of \NIDs in the suspect set to conduct effective post-hoc audits. This makes our approach robust even under strict data curation pipelines, thus making our solutions for LLM privacy auditing widely applicable.}

We analyze a wide range of popular LLM training datasets, including Pile~\citep{gao2020pile} and Dolma~\citep{dolma}, and identify that all of them contain multiple types of \NIDs with numerous examples per type. In \Cref{app:nids_in_data}, we provide an overview of the analyzed subsets and contained \NIDs in \Cref{tab:nids_in_data}. Notably, datasets that include code snippets, such as StackExchange and GitHub, have a high number of \NIDs. Additionally, large non-topic-specific corpora, such as RefinedWeb and Pile Common Crawl, also contain a significant number of \NIDs. SHA-1 and MD5 are the most frequent types of \NIDs overall. For some large subsets, such as RefinedWeb, we have as many as 16989 \NIDs. For instance, Pile's entire validation and test set, which comprises approximately 0.2\% of the entire Pile dataset, contains 293 \NIDs. 
Furthermore, as shown in \Cref{tab:nids_in_data}, even highly filtered and curated datasets such as Dolma~\citep{dolma} contain a substantial number of \NIDs. 
This makes our solutions for LLM privacy auditing widely applicable. %

\subsection{Leveraging \NIDs}

What makes \NIDs special is their rigorously specified format in combination with a sequence of random characters. Given that their format is known, it becomes possible to generate an \textit{infinite number} of other random strings that follow the same distribution. 
In the following, we present the intuition on how this property contributes to solving the most pressing challenges in LLM privacy research, namely, the lack of IID held-out data. 

\textbf{1) \NIDs provide post-hoc DP audits.}
We can use \NIDs to perform post-hoc auditing for LLMs trained with DP. %
To do so, we build on the one-run privacy audit by \citet{steinke2023privacy}. In their method, they select a set of canary data points to be included or excluded during a training run. After training, an auditor attempts to infer for each of these data points whether it was included or not. The fraction of correct guesses provides a lower bound on the DP parameters.
Using our \NIDs, retraining the model is no longer necessary. Instead, we generate random samples from the same distribution as the \NIDs seen during training. The \NIDs as natural canaries can be ranked against the generated ones, for auditing \textit{without any retraining}, \ie truly post-hoc. \Cref{sec:auditing} outlines our approach to using \NIDs for post-hoc DP auditing.

\textbf{2) \NIDs enable DI.}
\NIDs enable DI for suspect sets, \ie a dataset for which we want to assess whether it has been used to train a given LLM, without requiring a same-distribution private held-out set. %
As detailed above, DI relies on a private held-out set from the same distribution as the suspect set to perform its assessment---a requirement that is difficult to meet in practice. This is especially due to the challenge of obtaining same-distribution data post-hoc~\citep{zhang2024membership}, rendering DI challenging or impractical.
By generating large held-out sets from the same distribution, \NIDs address this issue, thus enabling DI to detect if an LLM was trained on a suspect set. If the suspect set was part of the LLM's training data, it will react differently to the \NIDs included in that set and their generated held-out counterparts. Otherwise, if it was not trained on the suspect set, its behavior will be the same over both sets, as both \NIDs and their generated counterparts, since to the LLM, they will just be the same type of random strings. 
We detail the use of \NIDs for DI in \Cref{sec:di}.

\subsection{Formalizing \NIDs} \label{sec:formalizing}

An \textit{\identifier (\ID)} is produced by sampling randomness $z$ from a known distribution and applying a generator $W$, i.e., $v = W(z)$. The set of all possible \IDs from this generator is $V = \{W(z): z \in \mathcal{Z}\}$. A \textit{\NaturalID (\NID)} is simply an \ID that actually appears in a real dataset. Given such an \NID, we can draw fresh random inputs $z'$ to generate additional \IDs from the same distribution, which we call \textit{\GeneratedIDs (\GIDs)}. Because the identifier space $V$ is extremely large, a newly generated \GID is overwhelmingly unlikely to coincide with any existing \NID in the data.

As a concrete example, consider $\texttt{Ethereum}$ addresses. An Ethereum address is effectively a 160-bit identifier, obtained from a private key through a deterministic derivation process. Given an \NID corresponding to an Ethereum address, we can use the associated generation function $W(z):=\texttt{ETH}(z)$ to generate new \GIDs. In this case, the set $V$ is the set of all valid Ethereum addresses (see \Cref{app:setup} for details on the structure of \NIDs and \GIDs, and \Cref{app:nids examples} for examples). Additionally, the probability of generating a \GID that exactly matches one of the \NIDs in the training data is negligible, since the address space has size $2^{160}\approx 1.46\times 10^{48}$.

The main property of \NIDs is that a priori each \ID $v \in V$ is equally likely to be generated and published because it only depends on the source of randomness. The second important property of \NIDs is that they allow easy sampling from the set $V$. 
In the suspect datasets $D_\text{sus}$, which we are auditing, there are usually $m$ \NIDs, with the corresponding sets $V_1, \dots, V_m$. Although the underlying identifier space $V$ is extremely large, for computational purposes we restrict attention to a \emph{finite candidate set}: for each detected \NID $\hat{v}_i$, we sample $c - 1$ fresh \GIDs\ and form $V_i = \{\hat{v}_i\} \cup \{c - 1 \ \text{\GIDs}\}$ with $|V_i| = c$. Furthermore, for each set $V_i$ where $i \in \{1,\dots,m\}$, we denote the \NID as $\hat{v}_i \in V_i$, and specifically, the \NID that belongs to the suspect dataset as $\hat{v}_i \in D_\text{sus}$. Finally, we define $\Sigma_i$ as the set of all the permutations over $V_i$.%

\section{DP Auditing with \NaturalIDs}
\label{sec:auditing}

Using our \NIDs, we adapt the one-run DP auditing method proposed by~\citet{steinke2023privacy} to create a novel post-hoc DP auditing. Their technique considers $m$ canary samples and uses coin flips to randomly determine which samples should be included in the training set. Therefore, it is a binary case of adding or removing a single sample (and selecting between two options) that requires further retraining. Subsequent works~\citep{panda2025privacy,liu2025enhancing} build upon the settings and methods proposed in the original paper, thus requiring retraining. 
In our case, we differ from previous approaches by eliminating the need to retrain the model to insert canaries, since \NIDs are inherently present in the data. Therefore, adding or removing multiple training examples independently is not required. This is particularly important for LLMs, for which retraining is prohibitively expensive and time-consuming. 
Furthermore, our method operates under more realistic assumptions compared to~\citet{kazmi2024panoramia}, who, although they relax the assumption of retraining, require training a generative model that must then generate samples following the original training data distribution. Additionally, we do not strengthen the canary signal for the audit by surrounding the canaries with random tokens, as in~\citet{panda2025privacy}. Finally, compared with \citet{mahloujifar2024auditing}, our method can be viewed as a ranking-based generalization, where the task is to correctly identify the true \NID from a set of $c$ candidates, by requiring it to appear among the top-$r$ ranked positions, rather than only identifying it as the single top-1 candidate.

\begin{figure}[t]
    \centering
    \includegraphics[width=0.95\linewidth]{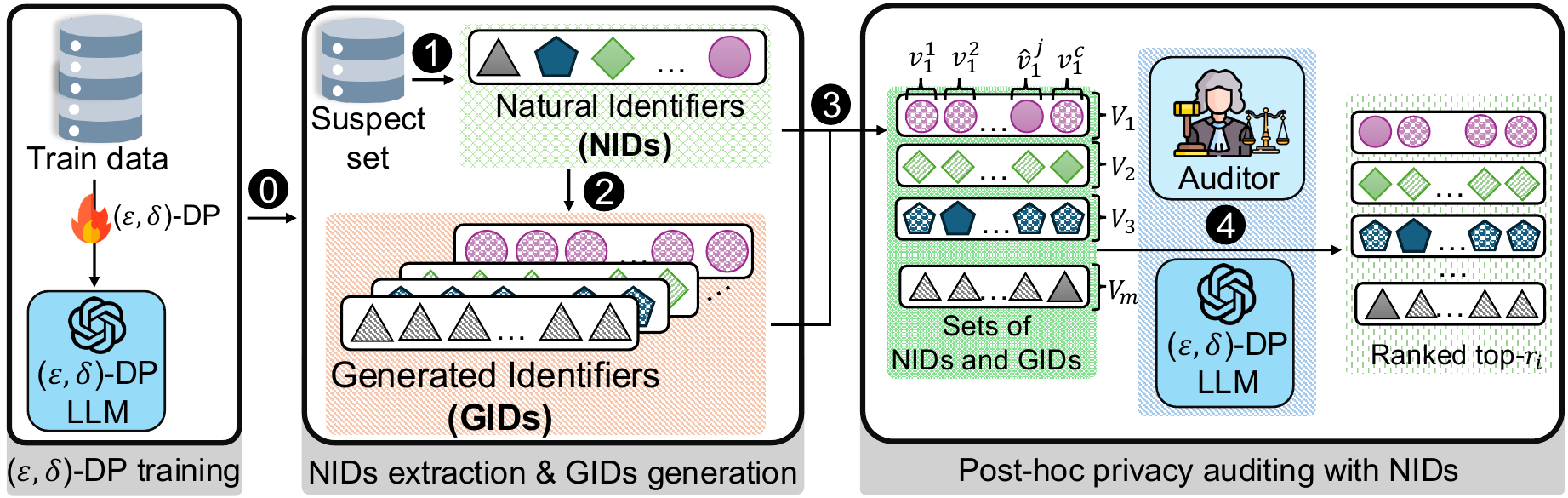}
     \caption{\textbf{Post-hoc DP auditing with \NIDs and their corresponding GIDs.} \mycirc 0 We consider the \NIDs as the input to a training procedure $M$ (also referred to as the mechanism), which may satisfy $(\varepsilon,\delta)$-DP.
\mycirc 1 Given a suspect dataset, we identify the \NIDs.
\mycirc 2 We generate the new $c-1$ \GIDs for each \NID. 
\mycirc 3 We form the candidate sets $V_1, \cdots, V_m$ by combining the \NIDs with corresponding \GIDs.
\mycirc 4 Given the resulting trained model and filtered \NIDs with corresponding \GIDs,
an auditor seeks to infer, for each set $V_i$, which sample was the \NID.
To do so, the auditor ranks the samples in $V_i$ from the most to the least likely \NID-candidate. A prediction is considered correct if the true \NID appears among the top-$r_i$ ranked samples, where $r_i$ is a predefined threshold.}
    \label{fig:schema}
\end{figure}

We show in \Cref{fig:schema} how to leverage the \NIDs to audit DP post-hoc.
By leveraging the \NIDs, our framework enables us to compute lower bounds on the privacy parameters of an algorithm without any additional training run of that algorithm. 
We first identify the \NIDs that were present in the training data and denote their total number as $m$. For each \NID $i \in \{1,\cdots,m\}$, we generate the corresponding \GIDs, and the corresponding set of \IDs $V_i = \{v_i^1,v_i^2,\dots,\hat{v}_i^j,\dots,v_i^c\}$, where we have $c-1$ \GIDs and a single \NID denoted as $\hat{v}_i^j$. One of the main properties of \NIDs is that, a priori, any element in $V_i$ could have been part of the training data in place of the \NID. This enables us to model privacy auditing analogously to the fixed-length dataset variant proposed by~\citet{steinke2023privacy}. 
The key distinction in our approach is that, rather than selecting between two alternatives prior to training, we consider the \NIDs as inserted canaries with the \GIDs as multiple left-out canary possibilities for each set $V_i$. For this reason, the attacker's goal is to predict which sample was the \NID by ranking the samples from the most likely to the least likely to be part of the training data. 
This offers more flexibility by enabling the attacker to represent uncertainty through a ranked list, rather than having to make a binary, top-1 inclusion decision.

Following the analysis of Theorem 5.2 by~\citet{steinke2023privacy}, we adapt their privacy auditing procedure to our setting to audit $(\varepsilon, \delta)$-DP mechanisms. We compare the rank of the real and alternative samples.
For simplicity and clarity, we state the $\varepsilon$-DP version of the theorem, and in \Cref{app:proofs}, we show the complete theorem (\Cref{theo:main_theorem}) for the $(\varepsilon, \delta)$-DP case.

\begin{theorem} \label{theorem:pure_dp_theorem}
Let $M: V_1 \times \cdots \times V_{m} \xrightarrow{} \Sigma_1 \times \cdots \times \Sigma_m $ be an $\varepsilon$-DP mechanism under replacement. Let $S \in V_1 \times \cdots \times V_{m}$ be uniformly random, and define $T = M(S) \in \Sigma_1 \times \cdots \times \Sigma_m$. Then, for all $v \in \mathbb{R}$, all $t \in \Sigma_1 \times \cdots \times \Sigma_m$ in the support of $T$, all $r_1, \cdots, r_m$ with $r_i \leq |V_i|$, and $\frac{r_i e^\varepsilon}{|V_i|-1+ e^\varepsilon} \leq 1$,
\begin{align*}
& \mathbb{P}_{\stackrel{S \leftarrow V_1 \times \cdots \times V_{m},}{T = M(S)}}[\sum_{i=1}^{m} \mathbbm{1}[\mathrm{rank}(t_i,S_i) \leq r_i] \geq v | T=t] \\
& \leq \mathbb{P}_{\hat{S} \leftarrow {\mathrm{Bernoulli}(\frac{r_i e^\varepsilon}{|V_i|-1+ e^\varepsilon})}_{i=1}^{m}}[\hat{S} \geq v] := \beta(\varepsilon, v, t)
\end{align*}
$\mathrm{rank}(a, b)$ returns the 1-based position of the element $b$ in permutation $a$.
\end{theorem}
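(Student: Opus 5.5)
The plan follows the proof of Theorem 5.2 of \citet{steinke2023privacy}: first obtain a per-coordinate conditional bound from $\varepsilon$-DP, then chain these bounds into a stochastic dominance statement for the sum of indicators.

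\textbf{Step 1: per-coordinate conditional bound.} Fix $t$ in the support of $T$, an index $i$, and any values $s_{<i}$ of $S_1,\dots,S_{i-1}$. We want to show
\[
\mathbb{P}\big[\mathrm{rank}(t_i,S_i)\le r_i \mid T=t,\ S_{<i}=s_{<i}\big]\le \frac{r_i e^\varepsilon}{|V_i|-1+e^\varepsilon}=:p_i .
\]
By Bayes' rule, with $S_i$ uniform on $V_i$ and independent of the other coordinates, the posterior of $S_i$ given $T=t$ and $S_{<i}=s_{<i}$ is
\[
q(u)\propto \mathbb{E}_{S_{>i}}\big[\mathbb{P}[M(s_{<i},u,S_{>i})=t]\big].
\]
Any two inputs that differ only in coordinate $i$ are neighbours under replacement. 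Hence $\varepsilon$-DP gives $q(u)\le e^\varepsilon q(u')$ for all $u,u'\in V_i$. (For discrete $t$ we use the point-mass formulation; in general we work with densities, i.e.\ likelihood ratios.)

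The event $\mathrm{rank}(t_i,S_i)\le r_i$ is the event that $S_i$ lies in a fixed set $A\subseteq V_i$ of size $r_i$, namely the first $r_i$ entries of $t_i$. We then maximise $q(A)$ subject to $\sum_u q(u)=1$ and $\max q\le e^\varepsilon \min q$. The extremal configuration sets $q(u)=e^\varepsilon x$ on $A$ and $q(u)=x$ on the complement.

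This yields $q(A)\le \frac{r_ie^\varepsilon}{r_ie^\varepsilon+|V_i|-r_i}$. That quantity is at most $p_i$ only when $r_i=1$, so here I must be careful. The bound actually stated in the theorem, $p_i=\frac{r_ie^\varepsilon}{|V_i|-1+e^\varepsilon}$, equals $r_i$ times the maximal single-element posterior $\frac{e^\varepsilon}{|V_i|-1+e^\varepsilon}$. It therefore follows from a union bound over the $r_i$ elements of $A$, since each satisfies $q(u)\le \frac{e^\varepsilon}{e^\varepsilon+|V_i|-1}$. I will use this union-bound route, which is valid (if slightly loose) and matches the stated form. The hypothesis $p_i\le1$ is needed precisely so that $p_i$ is a legitimate Bernoulli parameter.

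\textbf{Step 2: chaining to a Binomial-type dominance.} Let $W_i=\mathbbm{1}[\mathrm{rank}(t_i,S_i)\le r_i]$. Conditioned on $T=t$, Step 1 states that for every $i$ and every history, $\mathbb{P}[W_i=1\mid W_{<i}]\le p_i$. This holds because conditioning on $S_{<i}$ refines conditioning on $W_{<i}$, and the bound is uniform in $s_{<i}$.

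The standard coupling lemma used by \citet{steinke2023privacy} (their Lemma 4.5 style argument) then gives $\sum_i W_i$ stochastically dominated by $\sum_i \hat S_i$ with $\hat S_i\sim\mathrm{Bernoulli}(p_i)$ independent. The proof of the lemma is by induction on $m$. The key fact is that for a monotone function $f$ and a $\{0,1\}$ variable $W$ with $\mathbb{P}[W=1]\le p$, we have $\mathbb{E}[f(W+X)]\le\mathbb{E}[f(\hat W+X)]$. Applying this with $f=\mathbbm{1}[\cdot\ge v]$ gives the claimed tail bound $\beta(\varepsilon,v,t)$.

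\textbf{Main obstacle.} The main obstacle is Step 1. One must argue carefully that conditioning on $T=t$ together with earlier coordinates still leaves an $e^\varepsilon$-bounded likelihood ratio across the candidates in $V_i$. This requires averaging over the later coordinates $S_{>i}$, which preserves the ratio bound because it is a mixture of pointwise-bounded ratios. One must also handle measure-theoretic issues for non-discrete outputs. Everything else is a direct transcription of the Steinke et al.\ argument with two candidates replaced by $|V_i|$ candidates and top-1 replaced by top-$r_i$.
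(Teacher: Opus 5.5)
Your proposal is correct and follows the paper's proof. Bayes' rule plus $\varepsilon$-DP bounds each single-candidate posterior by $\frac{e^\varepsilon}{|V_i|-1+e^\varepsilon}$, summing over the first $r_i$ entries of $t_i$ gives $p_i$, and induction via the stochastic-dominance lemma (Lemma 4.9 of \citet{steinke2023privacy}) yields the Bernoulli-sum tail bound. One correction to your side remark: the extremal value $\frac{r_ie^\varepsilon}{r_ie^\varepsilon+|V_i|-r_i}$ is always at most $p_i$, because the denominators differ by $(r_i-1)(e^\varepsilon-1)\ge 0$; so the direct maximisation works for every $r_i$ and is tighter than the union bound that you and the paper use.
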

In our setting, \Cref{theorem:pure_dp_theorem} states that if the mechanism (also referred to as the training procedure) is $\varepsilon$-DP, any attacker attempting to detect the \NID is constrained. Concretely, the attacker ranks the mechanism’s output on both the \NID and its corresponding \GIDs from most to least likely to be part of the training data without knowing which one is the \NID. Then, they count how many \NIDs appear in the top-$r$, where $r$ is a predefined threshold. The theorem states that this count is bounded by a Bernoulli distribution, whose probability depends on $\varepsilon$, $r$, and the number of \GIDs.
Furthermore, compared to Theorem 5.2 by \citet{steinke2023privacy}, \Cref{theorem:pure_dp_theorem} and \Cref{theo:main_theorem} (presented in \Cref{app:proofs}) leverage a key property of \NIDs: the ability to generate an unlimited number of \GIDs (non-members).

Both theorems enable DP auditing through a hypothesis-testing framework. %
Moreover, in both cases, we can construct a confidence interval for a lower bound on $\varepsilon$. The proofs of \Cref{theorem:pure_dp_theorem} and \Cref{theo:main_theorem} are provided in \Cref{app:proofs}.

\begin{wrapfigure}[12]{r}{0.48\textwidth}
    \vspace{-3.1em}
    \begin{center}
\includegraphics[width=1.0\linewidth]{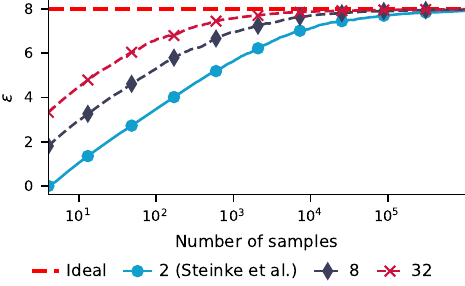}
    \end{center}
    \vspace{-1em}
    \caption{\textbf{Randomized response} with $\varepsilon = 8$ for different cardinalities $c = \{2,8,32\}$.
    \label{fig:randomized_response}
    } 
\end{wrapfigure}
\textbf{An Example of Our Privacy Auditing for the Randomized Response.} 
To illustrate our auditing framework, we use the classical randomized response mechanism~\citep{warner1965randomized}. 
In this setting, each private value can either be revealed truthfully or replaced at random, with probabilities chosen to ensure $\varepsilon$-DP (see \Cref{app:randomized_response_settings} for the detailed description of the setting). 
The analogy to our framework is straightforward: each true value corresponds to an \NID, and the alternative possibilities correspond to \GIDs. 
The auditor ranks possible values given the output, and without any additional information, the best strategy is to place the observed output first. 
This yields a correct-guess probability matching the theoretical bound in \Cref{theorem:pure_dp_theorem}.
\Cref{fig:randomized_response} shows the empirical behavior of our auditor on randomized response for different set cardinalities $c=|V_i|$. 
We see that higher cardinality (i.e., more generated \GIDs) is especially beneficial at larger privacy budgets ($\varepsilon \geq 8$), which is the typical regime in LLM training with DP~\citep{duan2023flocks,lilarge2022,rossi2024auditing,hanke2024openLLMs}. 
This demonstrates how our framework scales naturally with the number of \GIDs. Additionally, in \Cref{app:sample_complexity}, we analyze the relationship between the number of samples $m$ (\ie number of \NIDs) and $c$, as well as why a larger cardinality helps reduce the number of required samples.

\textbf{Post-hoc DP Auditing Without Retraining in LLMs.}
We verify that our proposed framework applies to privacy auditing in LLMs by adapting the black-box procedures proposed by \citet{steinke2023privacy} to the fixed-size dataset variant. The auditing process follows the algorithm described in \Cref{app:pseudocode}. Due to the lack of open-source private pretrained LLMs, to show the capabilities of our method, we finetune multiple Pythia models (70m, 160m, 410m, and 1b) using DP-SGD~\citep{abadi2016deep} %
We use all \NIDs extracted from the Pile test set~\citep{gao2020pile}. All lower and upper bounds are presented with 95\% confidence intervals.

\textbf{Setup.} The training data consists $m=197$ \NIDs from the Github Pile test set, ensuring complete coverage of our assumption. Then, for each \NID, we generate $c-1$ \GIDs. In this way, we have sets of \IDs $V_1, \dots, V_m$.
We set $\delta=10^{-4}$ for various values of $\varepsilon$ using the Privacy Random Variable (PRV) accountant~\citep{gopi2021numerical}, and finetune each model for 20 epochs using a maximum sequence length of 64 tokens and a clipping norm of $0.1$. 
To rank each set of \ID from most to least likely to be in the training data, we use Min-K\%~\citep{shi2024detecting} and Loss~\citep{yeom2018privacy}, and report the best result. %
By default, we set the ranking threshold to $r_i=1$ (top-1) for all $i\in\{1,\dots,m\}$. In this setting, a prediction is counted as correct only when the attacker's highest-scoring candidate coincides with the true \NID. Complementary results for additional models and for thresholds $r_i>1$ are reported in \Cref{app:extra-dp-sgd}.

\begin{figure}[h]
    \centering
    \begin{minipage}[b]{0.45\textwidth}
        \centering
        \includegraphics[width=0.9\textwidth]{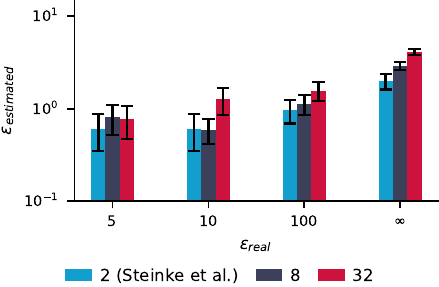}
        \subcaption{Pythia-1b}
    \end{minipage}
    \hfill
    \begin{minipage}[b]{0.45\textwidth}
        \centering
        \includegraphics[width=0.9\textwidth]{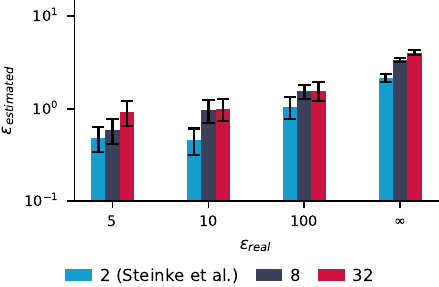}
        \subcaption{Pythia-410m}
    \end{minipage}
    \vspace{-0.5em}
    \caption{\textbf{Impact of cardinality ($c = \{2, 8, 32\}$) on $ \varepsilon$ estimation}. Experiments conducted using $\varepsilon$ values of $\{5, 10, 100, \infty\}$. The case $c=2$ corresponds to the method proposed by~\citet{steinke2023privacy}. The error bars represent a 95\% confidence interval.}
    \label{fig:cardinality}
    \vspace{-1em}
\end{figure}

\textbf{Higher Cardinality Improves Audits.} As a reference, we use the auditing of fixed-length datasets introduced by~\citet{steinke2023privacy}, which corresponds to a special case of our method where all sets $V_i$ have cardinality $c = 2$ and the corresponding threshold is $r_i = 1$.
The empirical analysis in \Cref{fig:cardinality} demonstrates that our method outperforms the baseline across multiple cardinality parameters ($c \in \{ 8, 32\}$) in fixed-length dataset settings. 
See \Cref{app:extra-dp-sgd} for the results of the other models and for additional experiments with thresholds $r_i > 1$.
Although higher cardinality can enhance the statistical power of the auditing procedure in the best-case scenario, meaning that fewer samples are required, the ranking task becomes increasingly complex. Instead of merely comparing two candidates, one must select from $c=|V_i|$ options. For smaller privacy budgets (\ie a more challenging prediction task), smaller cardinalities are beneficial. In contrast, for larger $\varepsilon$, higher cardinality tends to be advantageous and significantly outperforms the baseline.  This trend aligns with our insights for randomized response, where increasing cardinality makes the privacy auditing more precise and tighter, particularly in less restrictive privacy settings.

\section{Dataset Inference with \NIDs}
\label{sec:di}

Next, we turn to exploring the use of \NIDs and our generated same-distribution \GIDs for performing DI~\citep{maini2021dataset}.
As discussed in \Cref{sec:background}, the strongest limitation of DI is its reliance on a private held-out dataset from the same distribution as the suspect dataset, \ie the dataset for which we want to assess whether it was included in the training of the given model.
Such datasets are often not available in practical applications~\citep{zhang2024membership}.
We present how our \NIDs can overcome this limitation and enable successful DI for suspect datasets that contain \NIDs. 
We experiment with Pythia-2.8b, 6.9b, 12b (trained on the Pile), and OLMo-7B\footnote{\href{https://huggingface.co/allenai/OLMo-7B-0424-hf}{https://huggingface.co/allenai/OLMo-7B-0424-hf}} (trained on Dolma) to cover a range of model sizes and families. For ethical reasons, we focus on open models with known training data where we can \textit{verify the correctness} our evaluation w.r.t. to the ground truth training sets, which is impossible for proprietary models where we have no access to the true training data.

\begin{table*}[h!]
    \centering
    \scriptsize
    \caption{\textbf{MIAs on \NIDs for Pythia-12b.} The AUC for MIAs between the \NIDs and the corresponding \GIDs on various subsets of the Pile dataset. %
    }
    \label{tab:mia-main}
    \vspace{-1em}
\begin{tabular}{lcccccc|cc}
\toprule
& \multicolumn{2}{c}{\textbf{Full Pile}} & \multicolumn{2}{c}{\textbf{GitHub}} & \multicolumn{2}{c}{\textbf{StackExchange}} & \multicolumn{2}{c}{\textbf{Average}} \\
\textbf{MIA} & Train & Test & Train & Test & Train & Test & Train & Test \\
\midrule
Loss            & 58.6 & 50.3 & \textbf{71.8} & 51.1 & 50.3 & 50.9 & 60.2 & 50.8 \\
Min-K\%         & 57.6 & 51.0 & 68.4 & 50.6 & 50.7 & 51.2 & 58.9 & 50.9 \\
Min-K\%++       & 56.9 & \textbf{51.4} & 71.2 & 50.3 & \textbf{50.8} & \textbf{51.9} & 59.6 & \textbf{51.2} \\
ReCALL          & 53.5 & 50.2 & 50.6 & 50.3 & 50.0 & 51.1 & 51.4 & 50.5 \\
ReCALL(Hinge)   & 51.3 & 50.1 & 53.3 & 50.4 & 50.4 & 51.4 & 51.7 & 50.6 \\
Hinge           & \textbf{58.7} & 50.5 & \textbf{71.8} & \textbf{51.5} & 50.4 & 50.5 & \textbf{60.3} & 50.8 \\
\bottomrule
\end{tabular}

\end{table*}

\textbf{MIAs for DI.}
DI for LLMs~\citep{maini2024llm} aggregates the outputs of multiple MIAs to extract a strong signal from the suspect data. We follow this approach and extract the signal from the suspect set's \NIDs as a form of natural canaries.
Therefore, we use MIAs on \NIDs as a stepping stone for LLM DI.
In this setting, the attacker aims to distinguish \NIDs from their corresponding \GIDs. For the training set, \NIDs are drawn from the training data, while for the test set, they are drawn from the test data. In both cases, \GIDs are constructed from data that was not used during training, serving as held-out samples. For the test set evaluation, we expect the AUC to be close to random guessing. This serves as a sanity check to confirm that the \GIDs and \NIDs come from the same distribution, since neither is present in the training data.
To mimic the DI setting, we generate $c=127$ new \GIDs for each \NID, balancing computational cost and distribution quality. 
Using our identified \NID~suspect set and the respective generated \GIDs held-out set, we analyze existing state-of-the-art MIAs for LLMs, namely Loss~\citep{yeom2018privacy}, Min-K\%~\citep{shi2024detecting}, Min-K\%++~\citep{zhang2024min}, ReCaLL~\citep{xie2024recall}, and Hinge~\citep{carlini2022membership} to obtain useful signals for DI.
For most MIAs, performance on the test set is close to random guessing, as expected, confirming no distribution shift between the \NID~suspect set and the generated \GID~held-out set. 
Train-test behavior is well-calibrated, with higher average AUC on the train set.  
Results for Pythia-12b appear in \Cref{tab:mia-main}; \Cref{app:mia performance} reports additional models (Pythia, OLMo-7B) and TPR@1\% FPR.

\textbf{DI on \NIDs.}
Given a suspect set $D_\text{sus}$, we first need to identify and extract all the \NIDs in the dataset.
The extracted \NIDs form the suspect subset $D_\text{sus}'$, which we use to perform the DI.
Then, for every real \NID in $D_\text{sus}'$, we generate 127 new \GIDs with the same \NID type and with the same structure to form the held-out set from the same distribution as $D_\text{sus}'$.
With the signal from the MIAs above, following \citet{maini2024llm}, we extract the features from the suspect set and $D_\text{sus}'$ and our generated held-out set.
Next, following the DI protocol, we need to learn the correlation between the features (the MIA scores) and their membership status. To learn this correlation, we train a gradient boosting trees classifier to distinguish between the two distributions. To use all the samples available, we train and score the samples using K-Fold, and we ensure that the generated samples derived from a real sample end up in the same fold.
Finally, following \citet{maini2024llm}, we perform statistical testing and compute the p-values.
Under the null hypothesis, which assumes that the \NIDs in the suspect set are not part of the training data, the ranks of each \NID relative to its corresponding \GIDs should follow a uniform distribution. This means that if we order the \NIDs based on their association with \GIDs, their positions should be evenly distributed across the ranking scale. We apply the Kolmogorov-Smirnov (KS) test to test this assumption. If the KS test detects a significant deviation from uniformity, we reject the null hypothesis, suggesting that the \NIDs may, in fact, be present in the training data.
Small p-values ($< 0.01$) indicate that we can reject the null hypothesis, \ie we are confident that the model was trained on the suspect set.
Large p-values ($>> 0.01$) suggest inconclusiveness of the test, \ie we are not confident whether the model was trained on the suspect~set.

\textbf{Practical DI with NIDs.}
Using our generated held-out set with \GIDs and the suspect set $D_\text{sus}'$ with \NIDs, we perform DI on various models and data subsets. Our main results for DI are summarized in \Cref{tab:p-values-pile} and \Cref{tab:p-values-dolma}.
Compared with \citet{maini2024llm}, who used 1000 samples, we take much smaller suspect sets $D_\text{sus}'$ with 100 real \NIDs to simulate a realistic setup.
For each subset, we generate a held-out set using the \NIDs, and perform DI. 
Our method shows that for the suspect sets that were included in the training data, DI obtains low p-values ($<0.01$) that allow us to reject the null hypothesis. This highlights that the suspects are correctly identified as training data. At the same time, for test data (denoted as Test), \ie datasets that were not used to train the given LLM, we observe high p-values that do not allow us to reject the null hypothesis. The sets are, hence, correctly not marked as training data (p-values $>>0.01$).
We present further results on models of various sizes and with varying numbers of \NIDs in the suspect set in \Cref{fig:di_pvalue_pythia} of \Cref{app:di_experiments}.
The results highlight that the more \NIDs are available in $D_\text{sus}'$, the more reliable the DI.
Overall, using \NIDs and the generated held-out set, we observe no false positives, while correctly identifying all training subsets (true positives).
This highlights \NIDs' ability to enable practical DI.

\begin{table}[h]
\centering
\caption{\textbf{P-values for DI on the Pile Dataset with 100 suspect samples.} We use a 0.01 p-value threshold. We reject the null for all training subsets ($p\leq 0.01$) and do not reject it for the test set ($p>0.01$). All outcomes are correct (\cmark).}

\label{tab:p-values-pile}
\resizebox{\textwidth}{!}{
\begin{tabular}{l|cccccccc|cc}
\toprule
Model & GH & SE & HN & CC & AX & PM & IRC & Full & GH (Test) & Full (Test) \\
\midrule
Pythia 12B & 0.0031 \checkmark & 0.0001 \checkmark & 0.0001 \checkmark & 0.0001 \checkmark & 0.0001 \checkmark & 0.0001 \checkmark & 0.0001 \checkmark & 0.0001 \checkmark & 0.8182 \checkmark & 0.2847 \checkmark \\
Pythia 6.9B & 0.0001 \checkmark & 0.0001 \checkmark & 0.0001 \checkmark & 0.0002 \checkmark & 0.0001 \checkmark & 0.0001 \checkmark & 0.0001 \checkmark & 0.0001 \checkmark & 0.6139 \checkmark & 0.0811 \checkmark \\
Pythia 2.8B & 0.0001 \checkmark & 0.0001 \checkmark & 0.0001 \checkmark & 0.0001 \checkmark & 0.0001 \checkmark & 0.0001 \checkmark & 0.0001 \checkmark & 0.0001 \checkmark & 0.9632 \checkmark & 0.0660 \checkmark \\
\bottomrule
\end{tabular}
}
\scriptsize
Notation: GH = GitHub, SE = StackExchange, HN = HackerNews, CC = Pile-CC, AX = ArXiv, PM = PubMedCentral, IRC = UbuntuIRC
\end{table}

\begin{table}[h]
\centering
\caption{\textbf{P-values for DI on the Dolma Dataset with 100 suspect samples.} We use a 0.01 p-value threshold. We reject the null for all training subsets ($p\leq 0.01$) and do not reject it for the test set ($p>0.01$). All outcomes are correct (\cmark).}

\label{tab:p-values-dolma}
\resizebox{\textwidth}{!}{
\begin{tabular}{l|ccccccc|c}
\toprule
Model & OWM & PeS2o & RFW & AStack & MWika & AX & C4 & PP2 (Test) \\
\midrule
OLMo 7B & 0.0001 \checkmark & 0.0001 \checkmark & 0.0003 \checkmark & 0.0001 \checkmark & 0.0002 \checkmark & 0.0001 \checkmark & 0.0001 \checkmark & 0.8961 \checkmark \\
\bottomrule
\end{tabular}
}
\scriptsize
Notation: OWM = OpenWebMath, RFW = RefinedWeb, AStack = Algebraic Stack, MWika = MegaWika, AX = ArXiv, PP2 = Proof Pile 2
\end{table}

\reb{
\textbf{Controlled Ablations.} We also perform controlled ablations to characterize further how \NID-based DI behaves under different design choices. 
First, we compare our NIDs against \textbf{standard injected canaries}, \ie, canaries that do not naturally occur in the training data but must be manually added. Although injected canaries fall outside our post-hoc threat model, this controlled setting helps contextualize the strength of the \NID leakage relative to existing auditing methods.
We detail the choice and design of these canaries in \Cref{app:injected_canaries}. Our results in \Cref{tab:controlled-injected} show that \NIDs achieve competitive DI performance, measured in p-values.
Second, we evaluate the impact of the \GIDs being carefully sampled from the \textbf{same distribution} of the \NIDs. 
Remember that DI critically depends on the \GID generator matching the \NID distribution: misimplementations that change casing produce strong signals for both members and nonmembers, thereby inflating false positives.
To quantify this impact, we design \GID generations that mismatch the original \NIDs to various degrees. We describe our experimental setup in \Cref{app:misimplementation}. Our results show that deviations in distribution between \NIDs and \GIDs lead to false positives, highlighting the importance of our approach to generating \GIDs exactly from the same distribution as \NIDs. 
Third, we evaluate the impact of \textbf{stronger MIAs} on DI performance. Specifically, we augment the baseline features with CAMIA~\citep{chang2024context} and SURP~\citep{zhang2024adaptive}. See \Cref{app:strongmia} for details. Our results, shown in \Cref{tab:controlled-mia-features}, indicate that adding more powerful MIAs consistently improves DI results. These results suggest that ongoing advances in MIA techniques further improve our framework's results.  
Fourth, we quantify whether the \textbf{identifier structure matters}. We construct a synthetic string that follows the format of each \NIDs to measure the impact of the identifier structure. In \Cref{app:controlled-nids_types}, we detailed the experimental setup.
Our findings suggest that longer or more structured formats, such as SHA-512 and Java Serialization strings, yield the strongest DI signals, although shorter formats, such as MD5, still produce highly significant results, as shown in \Cref{tab:controlled-nids_types}. Finally, we assess the impact of \textbf{increasing the number of \NIDs} on the results of DI in \Cref{app:morenids}. Our results in \Cref{tab:controlled-numbers} suggest that increasing the number of \NIDs in the suspect set monotonically decreases the p-value in DI, illustrating the expected gains in statistical power.}

\reb{
\textbf{Task-Specific \NIDs.} 
In some smaller, task-specific datasets, standard  \NIDs%
might be less common.
To make DI practical in these settings, new task-specific \NIDs can be discovered.
As a case study, we consider the GSM8K dataset~\citep{cobbe2021gsm8k},
a math word-problem dataset without standard \NIDs.
}
\reb{
To generate valid and indistinguishable \GIDs for DI, we create task-specific \NIDs by treating each problem as a numeric template: for example, in ``Natalia sold 48/2 = <<48/2=24>>24 clips in May. Natalia sold 48+24 = <<48+24=72>>72 clips altogether in April and May. \#\#\#\# 72.'', we replace 48 and all dependent quantities (such as 24 and 72) with variables, resample consistent numbers to obtain a new problem, and use these as \NIDs and \GIDs. 
In \Cref{app:nids examples}, we provide some practical examples of \NIDs and the corresponding \GIDs.
See \Cref{app:controlled-gsm8k} for details on the experimental setup.}
\reb{
To assess whether the resulting \NIDs and \GIDs are suitable for our framework, we finetune Pythia-1b on 100 such \NIDs, and run DI.
The results in \Cref{tab:controlled-gsm8k} show that this new task-specific type of \NIDs produces statistically significant evidence for DI, confirming its effectiveness in various settings.
}

\begin{table}[h!]
\centering\small
\caption{\reb{\textbf{P-values for DI on GSM8K.} P-values obtained by our DI test on the GSM8K dataset, illustrating the effectiveness of task-specific \NIDs.}}
\label{tab:controlled-gsm8k}
\resizebox{\textwidth}{!}{
\begin{tabular}{l c c c c c c}
\hline
\textbf{Number of \NIDs} & \textbf{50} & \textbf{60} & \textbf{70} & \textbf{80} & \textbf{90} & \textbf{100} \\
\hline
\textbf{P-Value} 
& $8.43\times10^{-4}$ 
& $9.56\times10^{-5}$ 
& $3.35\times10^{-4}$ 
& $1.63\times10^{-5}$ 
& $2.12\times10^{-6}$ 
& $1.60\times10^{-6}$ \\
\hline
\end{tabular}
}
\vspace{-2em}
\end{table}

\section{Discussion and Conclusions} \label{sec:conclusion}

We introduce the concept of \naturalIDs (\NIDs) as a practical and scalable solution to a central challenge in LLM privacy research: enabling \emph{truly post-hoc privacy auditing}, \ie auditing models after training without requiring retraining or access to dedicated held-out data. This directly addresses a key limitation of most existing approaches, which rely on costly retraining procedures or artificially constructed held-out sets.
While we focus on leveraging \NIDs within the language domain for models trained on datasets containing such identifiers, our analysis shows that \NIDs are \emph{pervasively present} in standard LLM pretraining corpora. Their structured and reproducible nature enables the generation of an \emph{unlimited number of non-member samples} from the same distribution, which we use to construct effective post-hoc auditing sets.
Building on the one-run auditing framework, we demonstrate that \NIDs yield \emph{tighter DP bounds} with reduced sample complexity. By extending the task from binary classification to \emph{ranking-based inference}, our approach further improves the flexibility and statistical power of privacy attacks.
Beyond formal auditing, \NIDs also make \emph{DI} practically feasible using only the suspect data, without requiring access to held-out sets. Our empirical evaluations on open-source LLMs validate the effectiveness and practicality of this approach.
In summary, \NIDs offer a principled, both practical and efficient foundation for \emph{real-world post-hoc privacy auditing}, advancing the feasibility of scalable and responsible privacy assessments for modern language models.

\section{Ethics Statement}
\reb{
This work develops post-hoc auditing methods for LLMs using \NIDs, which raises dual-use concerns: the same techniques that help auditors and regulators assess training-data usage and privacy guarantees could, in principle, be misused to better locate training artifacts or strengthen reconstruction attempts against weakly protected models. We acknowledge this risk, and believe such tools should be deployed only in controlled settings. At the same time, we view this kind of research as necessary: without realistic auditing techniques, it is difficult to verify privacy claims, detect misuse of training data, or incentivize stronger protections such as robust DP training.
}

\section*{Acknowledgements}
Franziska Boenisch received funding from the European Research Council (ERC) under the European Union’s Horizon Europe research and innovation programme (grant agreement No 101220235).
Additionally, we would like to acknowledge our sponsors, who support our research with financial and in-kind contributions: OpenAI and G-Research. We also thank members of the SprintML group for their feedback.
Responsibility for the content of this publication lies with the authors.

\bibliography{main}
\bibliographystyle{iclr2026_conference}

\appendix
\section{Structure of \NIDs and \GIDs}\label{app:setup}
To extract the MIA signal, we use \NIDs and their corresponding \GIDs together with the surrounding textual context. Examples are provided in \Cref{app:nids examples}. For each \NID and its context, we generate a \GID by replacing the \NID with a randomly generated string that matches the original format, including structural features and casing patterns. This ensures that there is no distribution shift between the \NID and its generated \GIDs by construction. Each resulting string, whether it contains a \NID or a \GID, is limited to a maximum of 256 tokens. This includes both the identifier and its surrounding context. Within this limit, the final 64 tokens are reserved as a fixed suffix, and the remaining tokens are used for the prefix and the identifier itself. We ensure that both \NIDs and \GIDs are included in full and never partially truncated. All MIA signals are computed using these context-augmented strings. 
We include surrounding context to enhance the MIA signal, as prior work~\citep{shi2024detecting,zhang2024min,xie2024recall} has shown that longer input sequences can improve attack effectiveness.

\section{Examples of \NIDs and \GIDs} \label{app:nids examples}
In this section, we show a series of examples to represent common appearances of the NIDs. 
We bold the parts that differ between the \NIDs and \GIDs. As shown in these examples, to create a new held-out sample, we only replace the \NID with a \GID. 
From the boxes below, we observe that a priori both the \NID and the corresponding \GID are equally likely to be part of the training data.

\begin{chatbox}[title=NID for MD5 from RefinedWeb Dolma (\NID: \texttt{34d42a69a258fa51222a2e94b4563007})]
For a future birthday party -- fairy party favors. But I want to figure out a different fairy, not Disney...\\
\textbf{34d42a69a258fa51222a2e94b4563007}.jpg 300×300 pixels\\
A quick, easy project for the kids: playful, pom-pom covered trees.\\
Carrot \& Apple Cinnamon Streusel Muffins | a cup of mascarpone\\
Strawberry Banana Muffins recipe\\
PaperVine: Got Kids? Make your own Dinosaur Fossils!\\
Use modeling clay and some plastic dinosaurs to create dinosaur fossils. Made this last night to test it out. Turned out pretty cool. Trying to see if this would work for a kids event at work. I think it will! You only need 1 oz. of modeling clay per fossil.\\
\end{chatbox}

\begin{chatbox}[title=GID for MD5 from RefinedWeb Dolma (\GID: \texttt{9659875b92ba8fa639ba476aedbb73b9})]
For a future birthday party -- fairy party favors. But I want to figure out a different fairy, not Disney...\\
\textbf{9659875b92ba8fa639ba476aedbb73b9}.jpg 300×300 pixels\\
A quick, easy project for the kids: playful, pom-pom covered trees.\\
Carrot \& Apple Cinnamon Streusel Muffins | a cup of mascarpone\\
Strawberry Banana Muffins recipe\\
PaperVine: Got Kids? Make your own Dinosaur Fossils!\\
Use modeling clay and some plastic dinosaurs to create dinosaur fossils. Made this last night to test it out. Turned out pretty cool. Trying to see if this would work for a kids event at work. I think it will! You only need 1 oz. of modeling clay per fossil.\\
\end{chatbox}

\begin{chatbox}[title=NID for SHA-1 from the training set of Dolma PeS2o (\NID: \texttt{fac437a7d35ecfd53600ff4dc667563dfb251d25})]
Data availability

COPRO-Seq and INSeq datasets are deposited at the European Nucleotide Archive (ENA) under study accession: PRJEB38095. Proteomic data are available in the MassIVE database under project number: MSV000085341. COPRO-Seq analysis software can be accessed at https://gitlab.com/hibberdm/COPRO-Seq and INSeq analysis software at https://github.com/mengwu1002/Multi-taxon\_ analysis\_pipeline; a copy has been archived at swh:1:rev: \textbf{fac437a7d35ecfd53600ff4dc667563dfb251d25}.

Additional information
Competing interests Jeffrey I Gordon: Co-founder of Matatu, Inc., a company characterizing the role of diet-by-microbiota interactions in animal health. A provisional patent on the MFAB technology has been submitted (Washington University, assignee; PCT Application PCT/US2020/042678). The other authors declare that no competing interests exist.
\end{chatbox}

\begin{chatbox}[title=GID for SHA-1 from Dolma PeS2o (\GID: \texttt{95dfcf6dfc09c310e64c6540ad0b10e86394b006})]
Data availability

COPRO-Seq and INSeq datasets are deposited at the European Nucleotide Archive (ENA) under study accession: PRJEB38095. Proteomic data are available in the MassIVE database under project number: MSV000085341. COPRO-Seq analysis software can be accessed at https://gitlab.com/hibberdm/COPRO-Seq and INSeq analysis software at https://github.com/mengwu1002/Multi-taxon\_ analysis\_pipeline; a copy has been archived at swh:1:rev: \textbf{95dfcf6dfc09c310e64c6540ad0b10e86394b006}.

Additional information
Competing interests Jeffrey I Gordon: Co-founder of Matatu, Inc., a company characterizing the role of diet-by-microbiota interactions in animal health. A provisional patent on the MFAB technology has been submitted (Washington University, assignee; PCT Application PCT/US2020/042678). The other authors declare that no competing interests exist.
\end{chatbox}

\begin{chatbox}[title=NID for GSM8K]
\reb{
**Question**\\
Natalia sold clips to 48 of her friends in April, and then she sold half as many clips in May. How many clips did Natalia sell altogether in April and May?\\
**Answer**\\
Natalia sold 48/2 = <<48/2=24>>24 clips in May.\\
Natalia sold 48+24 = <<48+24=72>>72 clips altogether in April and May.\\
\#\#\#\# 72
}
\end{chatbox}

\begin{chatbox}[title=GID for GSM8K]
\reb{
**Question**\\
Natalia sold clips to 46 of her friends in April, and then she sold half as many clips in May. How many clips did Natalia sell altogether in April and May?\\
**Answer**\\
Natalia sold 46/2 = <<46/2=23>>23 clips in May.\\
Natalia sold 46+23 = <<46+23=69>>69 clips altogether in April and May.\\
\#\#\#\# 69
}
\end{chatbox}

\section{Post-hoc Extraction of \NIDs} \label{app:blind_baseline}
We describe how to extract \naturalIDs (\NIDs) robustly.
First, we select a series of regular expressions to identify potential \naturalIDs. Depending on the type of secret, there might be a high number of false positives, therefore, we need to further remove invalid samples. We achieve that by first removing duplicates and then running a blind baseline~\citep{das2024blind,zhang2024membership} using the n-grams as features and different types of tabular classifiers, such as Naive Bayes classifier, Gradient Boosting Trees, and Logistic Regression. Via K-Fold, we compute the MIA score of each sample, then we compare the rank of the real sample with respect to the generated ones. If the rank of the generated sample is too low or too high, we discard that sample. 

We follow this procedure to robustly filter invalid \naturalIDs. For instance, strings with "0123456789" are unlikely to be random strings and are most likely false positives. Finally, we check that the final blind baseline performance at the end of the filtering procedure is close to random guessing. %

\reb{\Cref{tab:nids_size} summarizes the \NID format, structure, and entropy. Additionally,} for each type of \NID, we have a specific way to generate them to closely resemble the original sample.\\
\textbf{MD5.} We generate the samples uniformly using this condition \texttt{[a-fA-F0-9]\{32\}} following the sample casing.\\
\textbf{SHA-1.} We generate the samples uniformly using this condition \texttt{[a-fA-F0-9]\{40\}} following the same casing of the original sample.\\
\textbf{SHA-256.} We generate the samples uniformly using this condition \texttt{[a-fA-F0-9]\{64\}} following the same casing of the original sample.\\
\textbf{SHA-512.} We generate the samples uniformly using this condition \texttt{[a-fA-F0-9]\{128\}} following the same casing of the original sample.\\
\textbf{Ethereum Address.} We generate the samples uniformly using this condition \texttt{0x[a-fA-F0-9]\{40\}}. We select and generate only samples using case sensitivity as a checksum (EIP-55: Mixed-case checksum address encoding). \\
\textbf{Java serialization.} All serializable Java classes have the \texttt{serialVersionUID} attribute, which is often equal to a random number, for instance, \texttt{private static final long serialVersionUID = 6146619729108124872L}. \\

\begin{table}[h]
    \centering
        \caption{\reb{Summary of \NID formats, alphabets, and entropy in bits.}}
    \label{tab:nids_size}
\begin{tabular}{l|l|l|l}
\hline
NID Type & Length & Alphabet & Entropy \\
\hline
MD5 & 32 hex & \texttt{[0-9a-fA-F]} & 128 \\
SHA-1 & 40 hex & \texttt{[0-9a-fA-F]} & 160 \\
SHA-256 & 64 hex & \texttt{[0-9a-fA-F]} & 256 \\
SHA-512 & 128 hex & \texttt{[0-9a-fA-F]} & 512 \\
Ethereum Address & 40 hex & \texttt{[0-9a-fA-F]} & 160 \\
Java Serialization & \textasciitilde20 digits & \texttt{[0-9]} & 64 \\
\end{tabular}

\end{table}

Although the overall computational cost for processing trillions of tokens is not negligible---approximately one week of processing on a 128-core server---several considerations are important. First, the current implementation has not been optimized, and substantial acceleration could be achieved with relatively modest engineering improvements. Second, the cost of computing each \NID is only on the order of tens of milliseconds, making the per-instance evaluation highly efficient. Most importantly, this approach is considerably less expensive than retraining large models from scratch. For example, a single training run of Pythia-12b with a highly optimized implementation requires approximately 72,300 hours of GPU computation. In contrast, our method avoids this prohibitive expense while still providing meaningful insights. Finally, it is not necessary to process the entire dataset; robust estimates can be obtained by sampling a substantially smaller subset, which further reduces the computational burden.

Once the \NIDs are extracted, the GPU cost is relatively small, as it consists of running the model inference once or twice, depending on the MIA used, for each identifier. All the GPU experiments were conducted on a Linux server equipped with NVIDIA A100 GPUs.

\section{Distribution of \NaturalIDs}\label{app:nids_in_data}

\Cref{tab:nids_in_data} shows for each subset and type of \NID the number of \NIDs. We highlight that large subsets, such as Dolma RefinedWeb, have a significant number of \NIDs.

\begin{table}[t]
    \centering
    \small
    \caption{\textbf{\NaturalIDs in Different Datasets.} We present the number of various \naturalIDs (here: SHA-1, MD5, SHA-256, Java Serialization, SHA-512, and Ethereum Address) in the analyzed datasets. The \textit{Total Number} denotes the total number of \naturalIDs in a given dataset.}
    \label{tab:nids_in_data}
    \resizebox{\textwidth}{!}{
\begin{tabular}{lcccccccc}
\toprule

Dataset &  Total Number & SHA-1 & MD5 & SHA-256 & Java Serialization & SHA-512 & Ethereum Address \\
\midrule
dolma RefinedWeb & 16989 & 8098 & 6192 & 2130 & 42 & 110 & 417\\
pile train github & 13182 & 5389 & 1938 & 4158 & 819 & 701 & 177\\
pile train stackexchange & 9862 & 4850 & 3235 & 1200 & 348 & 121 & 108\\
pile train pile cc & 3422 & 1078 & 2008 & 274 & 1 & 8 & 53\\
dolma algebraic stack train & 2384 & 1264 & 464 & 612 & 1 & 28 & 15\\
pile train hackernews & 2268 & 1340 & 821 & 93 & 0 & 7 & 7\\
dolma openwebmath train & 2207 & 1212 & 727 & 221 & 1 & 20 & 26\\
pile train ubuntuirc & 1056 & 618 & 340 & 88 & 0 & 9 & 1\\
dolma c4 & 791 & 408 & 301 & 63 & 0 & 4 & 15\\
dolma PeS2o & 435 & 235 & 174 & 11 & 0 & 1 & 14\\
dolma MegaWika & 383 & 115 & 200 & 62 & 0 & 2 & 4\\
dolma ArXiv & 332 & 239 & 58 & 21 & 0 & 2 & 12\\
Pile test (all subsets) & 293 & 130 & 69 & 62 & 13 & 14 & 5\\
pile train pubmedcentral & 225 & 66 & 152 & 7 & 0 & 0 & 0\\
pile train ArXiv & 207 & 75 & 122 & 7 & 0 & 0 & 3\\
pile test github & 197 & 80 & 36 & 52 & 13 & 12 & 4\\
pile train wikipediaen & 85 & 15 & 66 & 3 & 0 & 1 & 0\\
pile test stackexchange & 58 & 34 & 16 & 6 & 0 & 2 & 0\\
openwebmath test & 46 & 19 & 20 & 6 & 0 & 1 & 0\\
algebraic stack test & 39 & 28 & 4 & 7 & 0 & 0 & 0\\
dolma wiki & 38 & 11 & 22 & 3 & 0 & 2 & 0\\
pile test pile cc & 18 & 6 & 8 & 3 & 0 & 0 & 1\\
pile train philpapers & 16 & 1 & 15 & 0 & 0 & 0 & 0\\
pile train freelaw & 15 & 1 & 14 & 0 & 0 & 0 & 0\\
pile test hackernews & 13 & 7 & 6 & 0 & 0 & 0 & 0\\
dolma tulu flan & 10 & 0 & 9 & 1 & 0 & 0 & 0\\
pile test ubuntuirc & 5 & 3 & 2 & 0 & 0 & 0 & 0\\
pile train enronemails & 4 & 0 & 4 & 0 & 0 & 0 & 0\\
pile test wikipediaen & 2 & 0 & 1 & 1 & 0 & 0 & 0\\
dolma books & 2 & 0 & 2 & 0 & 0 & 0 & 0\\
pile train gutenbergpg 19 & 1 & 0 & 1 & 0 & 0 & 0 & 0\\
pile train pubmedabstracts & 1 & 0 & 1 & 0 & 0 & 0 & 0\\
\bottomrule
\end{tabular}
}
\end{table}

\section{Further Theory and Proofs}\label{app:proofs}
First, we state a useful definition and Lemma by~\citet{steinke2023privacy}, and then use them to prove \Cref{theorem:pure_dp_theorem}.

\begin{definition}[Stochastic Dominance][Definition 4.8,~\citet{steinke2023privacy}]
    Let $X, Y \in \mathbb{R}$ be random variables. We say $X$ is stochastically dominated by $Y$ if $\mathbb{P}[X > t] \leq \mathbb{P}[Y > t]$ for all $t \in \mathbb{R}$.
\end{definition}

\begin{lemma}\label{lemma:4.9} [Lemma 4.9,~\citet{steinke2023privacy}]
    Suppose $X_1$ is stochastically dominated by $Y_1$. Suppose that, for all $x \in \mathbb{R}$, the conditional distribution $X_2 | X_1 = x$ is stochastically dominated by $Y_2$. Assume that $Y_1$ and $Y_2$ are independent. Then, $X_1+X_2$ is stochastically dominated by $Y_1 + Y_2$.
\end{lemma}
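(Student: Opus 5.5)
The plan is to prove the inequality $\mathbb{P}[X_1+X_2>t]\le\mathbb{P}[Y_1+Y_2>t]$ for a fixed but arbitrary $t\in\mathbb{R}$. I will do this by conditioning on $X_1$, replacing $X_2$ with $Y_2$, and then replacing $X_1$ with $Y_1$. Throughout I interpret ``the conditional distribution $X_2\mid X_1=x$'' as a regular conditional distribution $\kappa(x,\cdot)$, which exists for real-valued random variables. The hypothesis then says $\kappa(x,(s,\infty))\le \mathbb{P}[Y_2>s]$ for all $x,s\in\mathbb{R}$.

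\textbf{Step 1 (swap $X_2$ for $Y_2$).} By the disintegration formula, $\mathbb{P}[X_1+X_2>t]=\mathbb{E}\bigl[\kappa(X_1,(t-X_1,\infty))\bigr]$. Applying the hypothesis pointwise with $s=t-x$ gives
\begin{align*}
\mathbb{P}[X_1+X_2>t]\le \mathbb{E}[g(X_1)],\qquad g(x):=\mathbb{P}[Y_2>t-x].
\end{align*}
The function $g$ takes values in $[0,1]$ and is nondecreasing in $x$, because $t-x$ decreases as $x$ grows. In particular $g$ is Borel, so the expectation is well defined.

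\textbf{Step 2 (swap $X_1$ for $Y_1$).} This is the step I expect to be the only genuine obstacle. I need to show that stochastic dominance, which is stated only through the strict tails $\mathbb{P}[\cdot>a]$, implies $\mathbb{E}[g(X_1)]\le\mathbb{E}[g(Y_1)]$ for every bounded nondecreasing $g$.

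I will use the layer-cake formula $\mathbb{E}[g(X)]=\int_0^1\mathbb{P}[g(X)>u]\,du$. Since $g$ is monotone, each superlevel set $\{x: g(x)>u\}$ is an up-set of $\mathbb{R}$. Such a set is one of four kinds:
\begin{itemize}
\item $\emptyset$ or $\mathbb{R}$, where the two probabilities are equal;
\item $(a,\infty)$, where the hypothesis applies directly;
\item $[a,\infty)$, which needs an extra argument.
\end{itemize}
For the closed half-line I pass to a limit using continuity from above:
\begin{align*}
\mathbb{P}[X_1\ge a]=\lim_{n\to\infty}\mathbb{P}[X_1>a-1/n]\le\lim_{n\to\infty}\mathbb{P}[Y_1>a-1/n]=\mathbb{P}[Y_1\ge a].
\end{align*}
Hence $\mathbb{P}[g(X_1)>u]\le\mathbb{P}[g(Y_1)>u]$ for every $u$, and integrating over $u\in[0,1]$ gives $\mathbb{E}[g(X_1)]\le\mathbb{E}[g(Y_1)]$.

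\textbf{Step 3 (recombine).} Since $Y_1$ and $Y_2$ are independent, Fubini's theorem gives
\begin{align*}
\mathbb{E}[g(Y_1)]=\mathbb{E}\bigl[\mathbb{P}[Y_2>t-Y_1\mid Y_1]\bigr]=\mathbb{P}[Y_1+Y_2>t].
\end{align*}
Chaining Steps 1–3 yields $\mathbb{P}[X_1+X_2>t]\le\mathbb{P}[Y_1+Y_2>t]$. As $t$ was arbitrary, $X_1+X_2$ is stochastically dominated by $Y_1+Y_2$.

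Note that no independence between $X_1$ and $X_2$, or between the $X$'s and $Y$'s, is ever used. Only the independence of $Y_1$ and $Y_2$ is needed, and it enters only in Step 3.
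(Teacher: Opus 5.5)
Your proof is correct. The paper does not prove this lemma itself; it imports it from \citet{steinke2023privacy}, and your argument is the standard conditioning proof of it. You bound $\mathbb{P}[X_1+X_2>t]$ by $\mathbb{E}[g(X_1)]$ with the nondecreasing $g(x)=\mathbb{P}[Y_2>t-x]$, pass from $X_1$ to $Y_1$ by monotonicity, and use independence of $Y_1,Y_2$ to get $\mathbb{E}[g(Y_1)]=\mathbb{P}[Y_1+Y_2>t]$. Your layer-cake Step 2, including the limit argument for closed half-lines, simply makes explicit the usually quoted fact that strict-tail dominance implies $\mathbb{E}[g(X_1)]\le\mathbb{E}[g(Y_1)]$ for bounded nondecreasing $g$.
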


Here, we have the proof of \Cref{theorem:pure_dp_theorem}.

\begin{proof}\label{proof:pure_dp_theorem}
Our analysis is similar to Proposition 5.1 by~\citet{steinke2023privacy}. \\
Fix some $t \in \Sigma_1 \times \cdots \times \Sigma_m$, and $i \in \{1, \dots, m\}$, $a \in V_i$, and $s_{<i} \in V_1 \times \cdots \times V_{i}$. Using Bayes' law and $\varepsilon$-DP, we have
\begin{align*}
    & \mathbb{P}[S_i=a | M(S)=t, S_{<i}=s_{<i}] \\
    & = \frac{\mathbb{P}[M(S)=t | S_i=a, S_{<i}=s_{<i}] \mathbb{P}[S_i=a]}{\mathbb{P}[M(S)=t | S_{<i}=s_{<i}]} \\
    & = \frac{\mathbb{P}[M(S)=t | S_i=a, S_{<i}=s_{<i}] \frac{1}{|V_i|}}{\sum_{j=1}^{|V_i|}\mathbb{P}[M(S)=t | S_i=V_{i,j}, S_{<i}=s_{<i}]\mathbb{P}[S_i=V_{i,j}]} \\
    & = \frac{\mathbb{P}[M(S)=t | S_i=a, S_{<i}=s_{<i}] \frac{1}{|V_i|}}{\sum_{j=1}^{|V_i|}\mathbb{P}[M(S)=t | S_i=V_{i,j}, S_{<i}=s_{<i}] \frac{1}{|V_i|}} \\
    & = \frac{1}{1 + \sum_{j=1, V_{i,j} \neq a}^{|V_i|}{\frac{\mathbb{P}[M(S)=t | S_i=V_{i,j}, S_{<i}=s_{<i}]}{\mathbb{P}[M(S)=t | S_i=a, S_{<i}=s_{<i}]}}} 
    \in \left[\frac{1}{1+(|V_i|-1)e^{\varepsilon}}, \frac{e^{\varepsilon}}{|V_i|-1 + e^{\varepsilon}}\right]
\end{align*}
Additionally, we can observe that for all $i \in \{1, \dots, m\}$, we have that $\mathbb{P}[\mathrm{rank}(t_i, S_i) \leq r_i] = \sum_{j=1}^{r_i} \mathbb{P}[S_i=t_{i,j}]$. Therefore, we can bound 
\begin{align*}
    & \mathbb{P}[\mathrm{rank}(t_i, S_i) \leq r_i] = \sum_{j=1}^{r_i} \mathbb{P}[S_i = t_{i,j} | M(S)=t, S_{<i}=s_{<i}] \\
&  \frac{1}{1 + (|V_i| - 1)e^\varepsilon} \leq \mathbb{P}[S_i = t_i,j \mid M(S) = t, S_{<i} = s_{<i}] \leq \cdot \frac{e^\varepsilon}{|V_i| - 1 + e^\varepsilon}\\
& \frac{r_i}{1 + (|V_i| - 1)e^\varepsilon} \leq \mathbb{P}[\text{rank}(t_i, S_i) \leq r_i \mid M(S) = t, S_{<i} = s_{<i}] \leq \frac{r_i e^\varepsilon}{|V_i| - 1 + e^\varepsilon}
\end{align*}

\begin{align*}
    \mathbb{P}[\mathrm{rank}(t_i, S_i) \leq r_i | M(S)=t, S_{<i}=s_{<i}] 
    \in \left[\frac{r_i}{1+(|V_i|-1)e^{\varepsilon}}, \frac{r_i e^{\varepsilon}}{|V_i|-1 + e^{\varepsilon}}\right]
\end{align*}
Thus, $\mathbb{P}[\mathrm{rank}(t_i, S_i) \leq r_i | M(S)=t, S_{<i}=s_{<i}] \leq \frac{r_i e^{\varepsilon}}{e^{\varepsilon}+|V_i|-1}$. With that, we can prove the result by induction. We inductively assume that $W_{m-1} := \sum_{i=1}^{m-1} \mathbbm{1}[\mathrm{rank}(t_i,S_i) \leq r_i]$ is stochastically dominated by $\hat{W}$ which is $\mathrm{Bernoulli}(\frac{r_i e^\varepsilon}{|V_i| - 1 + e^\varepsilon})^{m-1}$. As above, $\mathbbm{1}[\mathrm{rank}(t_i, S_i) \leq r_i]$ is stochastically dominated by $\mathrm{Bernoulli}(\frac{r_m e^{\varepsilon}}{e^{\varepsilon}+|V_m|-1})$. By Lemma 4.9 by~\citet{steinke2023privacy}, $W_m = W_{m-1} + \mathbbm{1}[\mathrm{rank}(t_m, S_m) \leq r_m]$ is stochastically dominated by ${\mathrm{Bernoulli}(\frac{r_i e^\varepsilon}{|V_i| - 1 + e^\varepsilon})}_{i=1}^{m}$.
\end{proof}

To show the case $(\varepsilon, \delta)$-DP, we will first state Lemma 5.6 by~\citet{steinke2023privacy}. Then following the analysis of Proposition 5.7 and Theorem 5.2 by~\citet{steinke2023privacy}, we prove \Cref{theo:main_theorem}.

\begin{lemma}\label{lemma:5.6}[Lemma 5.6,~\citet{steinke2023privacy}]
Let $P$ and $Q$ be probability distributions over $\mathcal{Y}$. Fix $\varepsilon, \delta \geq 0$. Suppose that, for all measurable $S \subseteq \mathcal{Y}$, we have
\[
P(S) \leq e^\varepsilon \cdot Q(S) + \delta \quad \text{and} \quad Q(S) \leq e^\varepsilon \cdot P(S) + \delta.
\]
Then there exists a randomized function $E_{P, Q}: \mathcal{Y} \to \{0,1\}$ with the following properties. 

Fix $p \in [0,1]$ and suppose $X \sim \mathrm{Bernoulli}(p)$. If $X = 1$, sample $Y \sim P$; and, if $X = 0$, sample $Y \sim Q$. Then, for all $y \in \mathcal{Y}$, we have
\[
\mathbb{P}_{X \sim \mathrm{Bernoulli}(p), \, Y \sim X P + (1-X)Q}\big[X = 1 \land E_{P,Q}(Y) = 1 \mid Y = y\big] \leq \frac{p}{p + (1-p)e^{-\varepsilon}}.
\]

Furthermore,
\[
\mathbb{E}_{Y \sim P}[E_{P,Q}(Y)] \geq 1 - \delta \quad \text{and} \quad \mathbb{E}_{Y \sim Q}[E_{P,Q}(Y)] \leq \delta.
\]
\end{lemma}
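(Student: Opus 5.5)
The plan is to build $E_{P,Q}$ explicitly from the densities of $P$ and $Q$. Write $p(y)$ and $q(y)$ for these densities with respect to a common dominating measure (for instance $P+Q$), so that $\mathbb{P}[Y=y]=p\,p(y)+(1-p)\,q(y)$. I will use a prior parameter $\pi$ for the Bernoulli parameter in the statement, to keep it apart from the density $p(y)$. The object to control is the posterior
\[
\mathbb{P}[X=1\land E(Y)=1\mid Y=y]=\frac{\pi\, p(y)\, e(y)}{\pi\, p(y)+(1-\pi)\, q(y)},
\]
where $e(y):=\mathbb{P}[E(y)=1]$ is the acceptance probability of the randomized function. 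My first candidate is the clipped likelihood ratio
\[
e(y):=\min\Bigl\{1,\ e^{\varepsilon}\,q(y)/p(y)\Bigr\},
\]
so that $E$ accepts exactly when the observation is not "too P-like".

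\textbf{Step 1: the posterior bound, pointwise in $y$.} I split into two cases.
\begin{itemize}
\item If $p(y)\le e^{\varepsilon}q(y)$, then $e(y)=1$. Using $q(y)\ge e^{-\varepsilon}p(y)$ in the denominator gives a posterior of at most $\pi/(\pi+(1-\pi)e^{-\varepsilon})$.
\item If $p(y)>e^{\varepsilon}q(y)$, then the numerator is $\pi e^{\varepsilon}q(y)$. The denominator is at least $\pi e^{\varepsilon}q(y)+(1-\pi)q(y)$, so the ratio is at most $\pi e^{\varepsilon}/(\pi e^{\varepsilon}+1-\pi)$, which is the same bound.
\end{itemize}
This step needs no DP assumption; it holds purely from the choice of $e$.

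\textbf{Step 2: acceptance probability under $P$.} We have
\[
\mathbb{E}_{Y\sim P}[e(Y)]=\int\min\{p,e^{\varepsilon}q\}=1-\int\bigl(p-e^{\varepsilon}q\bigr)_+ .
\]
Take $S=\{y: p(y)>e^{\varepsilon}q(y)\}$. Then the integral of the positive part equals $P(S)-e^{\varepsilon}Q(S)$, and the hypothesis $P(S)\le e^{\varepsilon}Q(S)+\delta$ bounds it by $\delta$. Hence $\mathbb{E}_P[E]\ge 1-\delta$, using only the first of the two indistinguishability inequalities.

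\textbf{Step 3: the condition under $Q$ — the main obstacle.} I expect this step to be the real difficulty. As literally written, $\mathbb{E}_{Y\sim Q}[E]\le\delta$ cannot hold together with $\mathbb{E}_{Y\sim P}[E]\ge1-\delta$ when $P=Q$ and $\delta<1/2$. So I would first check the original statement of Lemma 5.6 in \citet{steinke2023privacy}. I expect the intended condition is a "good event" guarantee that also holds with probability $\ge 1-\delta$ under $Q$, with the roles of $P$ and $Q$ symmetric.

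For that version, I would first try the same $e$. This gives $\mathbb{E}_Q[e]\ge 1-Q(S)$, where $S$ is the set from Step 2. If $Q(S)$ is not directly bounded by $\delta$, I would fall back on the standard decomposition lemma for $(\varepsilon,\delta)$-indistinguishable pairs (as in Kairouz et al. or Murtagh--Vadhan). That lemma writes $P=(1-\delta)P'+\delta P''$ and $Q=(1-\delta)Q'+\delta Q''$ with $e^{-\varepsilon}\le dP'/dQ'\le e^{\varepsilon}$. I would then define $E$ as the indicator of having been drawn from the "primed" component, i.e.\ acceptance probability $(1-\delta)p'(y)/p(y)$, suitably coupled between $P$ and $Q$. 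This gives acceptance $\ge 1-\delta$ under both distributions. The posterior computation of Step 1 then repeats verbatim with $p'$ and $q'$ in place of $p$ and $q$, because the numerator only involves the primed part, whose density ratio is pointwise bounded by $e^{\varepsilon}$.
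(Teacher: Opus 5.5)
The paper gives no proof of this lemma; it imports it from \citet{steinke2023privacy}. Your Steps 1 and 2 are correct: the clipped likelihood ratio $e(y)=\min\{1,e^{\varepsilon}q(y)/p(y)\}$ gives the posterior bound pointwise, and it gives acceptance at least $1-\delta$ under $P$. Your diagnosis of the last line is also right. With $P=Q$ and $\delta<1/2$, no function can have acceptance $\ge 1-\delta$ under $P$ and $\le\delta$ under $Q$. So ``$\le\delta$'' is a transcription error for $\mathbb{E}_{Y\sim Q}[E_{P,Q}(Y)]\ge 1-\delta$. That reading is the only one consistent with the paper's later use of the lemma. There the functions $E_{M(s_{<i},a),\hat{M}(s_{<i},a)}$ are multiplied over $a\in V_i$ and union-bounded to give acceptance at least $1-|V_i|\delta$ under every $S_i=b$, which needs high acceptance on the $Q$ side.

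The gap is that you never prove this corrected $Q$-side bound. Your first attempt, $\mathbb{E}_Q[e]\ge 1-Q(S)$, is too lossy because $Q(S)$ need not be small. Take $\varepsilon=0$ on two points with $P=(b+\delta,1-b-\delta)$ and $Q=(b,1-b)$. The hypotheses hold, $S$ is the first point, and $Q(S)=b$ can be $1/2$.

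Your fallback is left unspecified at the crucial point. $E_{P,Q}$ must be a single randomized function of $y$, so there is nothing to ``couple'' between $P$ and $Q$. With acceptance $(1-\delta)p'(y)/p(y)$ you get exactly $1-\delta$ under $P$. Under $Q$ you get $(1-\delta)\int q\,p'/p$, which you have not bounded and which, for a valid decomposition, need not be at least $1-\delta$.

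The missing observation is that on $S$ the rejection probability is $1-e^{\varepsilon}q/p$, not $1$, and that $q/p<e^{-\varepsilon}$ there. Hence
\[
1-\mathbb{E}_{Y\sim Q}[e(Y)]=\int_S q\Bigl(1-\frac{e^{\varepsilon}q}{p}\Bigr)=\int_S \frac{q}{p}\bigl(p-e^{\varepsilon}q\bigr)\le e^{-\varepsilon}\bigl(P(S)-e^{\varepsilon}Q(S)\bigr)\le e^{-\varepsilon}\delta\le\delta .
\]
So the same clipped ratio delivers both acceptance guarantees, using only the first hypothesis inequality, and no decomposition lemma is needed.
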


\begin{theorem}\label{theo:main_theorem}
Let $M: V_1 \times \cdots \times V_{m} \xrightarrow{} \Sigma_1 \times \cdots \times \Sigma_m $ be an $(\varepsilon, \delta)$-DP mechanism under replacement. Let $S \in V_1 \times \cdots \times V_{m}$ be uniformly random. Let $T = M(S) \in \Sigma_1 \times \cdots \times \Sigma_m$. Then, for all $v \in \mathbb{R}$, all $t \in \Sigma_1 \times \cdots \times \Sigma_m$ in the support of $T$, all $r_1, \dots, r_m$ with $r_i \leq |V_i|$, and $\frac{r_i e^\varepsilon}{|V_i|-1+ e^\varepsilon} \leq 1$,
\begin{align*}
& \mathbb{P}_{S \leftarrow V_1 \times \cdots \times V_{m}, T = M(S)}[\sum_{i=1}^{m} \mathbbm{1}[\mathrm{rank}(t_i,S_i) \leq r_i] \geq v | T=t] \\
& \leq \beta + \alpha \delta \sum_{i=1}^{m} |V_i| \\
& \textit{where} \\ 
& \beta = \mathbb{P}_{\hat{S}}[\hat{S} \geq v], \\
& \alpha = \max{(\frac{1}{i} \mathbb{P}_{\hat{S}}[\hat{S} \geq v -i] : i \in \{1, \dots, m\})},\\
& \hat{S} \leftarrow {\mathrm{Bernoulli}\left(\frac{r_i e^\varepsilon}{|V_i|-1+ e^\varepsilon}\right)}_{i=1}^{m}.
\end{align*}
\end{theorem}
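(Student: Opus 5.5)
The plan follows Proposition 5.7 and Theorem 5.2 of \citet{steinke2023privacy}, generalized from two candidates per slot to $|V_i|$ candidates with top-$r_i$ ranking. The first step is the single-coordinate conditional bound, which replaces the exact Bayes computation in the proof of \Cref{theorem:pure_dp_theorem}. Fix $t$, an index $i$, and a prefix $s_{<i}$. For each candidate $a\in V_i$, let $P_a$ be the conditional law of $M(S)$ given $S_i=a$ and $S_{<i}=s_{<i}$, where $S_{>i}$ stays uniform. Replacement adjacency gives $P_a \approx_{\varepsilon,\delta} P_b$ for all $a,b\in V_i$.

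Next, I would fix one candidate $a$ ranked within the top $r_i$ of $t_i$. Let $Q_a$ be the mixture $\frac{1}{|V_i|-1}\sum_{b\neq a}P_b$. This is still $(\varepsilon,\delta)$-indistinguishable from $P_a$, since the relevant inequalities are preserved under convex combination. Applying \Cref{lemma:5.6} with $p=1/|V_i|$ to the pair $(P_a,Q_a)$ produces a randomized event $E_a$ with two properties. First, $\mathbb{P}[S_i=a \wedge E_a=1\mid T=t,\ldots]\le \frac{p}{p+(1-p)e^{-\varepsilon}}=\frac{e^\varepsilon}{|V_i|-1+e^\varepsilon}$. Second, $\mathbb{P}[E_a=0\mid S_i=a]\le\delta$.

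Summing over the $r_i$ top-ranked candidates $a$, I define a ``good'' indicator $G_i$. It equals the event $\mathbbm{1}[\mathrm{rank}(t_i,S_i)\le r_i]$ intersected with $E_{S_i}=1$. Conditionally on any past, $G_i$ then has success probability at most $\frac{r_i e^\varepsilon}{|V_i|-1+e^\varepsilon}$, which is $\le 1$ by hypothesis. \Cref{lemma:4.9} and the same induction as in the proof of \Cref{theorem:pure_dp_theorem} then show that $\sum_i G_i$ is stochastically dominated by $\hat S$.

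The remaining step handles the failure events, i.e.\ the coordinates where the rank is within $r_i$ but $E_{S_i}=0$. Let $F=\sum_i \mathbbm{1}[E_{S_i}=0]$. Over the joint randomness, $\mathbb{E}[F]\le\delta\sum_i|V_i|$; this is a crude union bound across all candidates $a\in V_i$, which is what yields the factor $|V_i|$. Following Theorem 5.2, I would write
\begin{align*}
\mathbb{P}\Big[\textstyle\sum_i \mathbbm{1}[\mathrm{rank}(t_i,S_i)\le r_i]\ge v\Big]\le \mathbb{P}\Big[\textstyle\sum_i G_i\ge v\Big]+\sum_{k\ge1}\mathbb{P}[F=k]\,\mathbb{P}\Big[\hat S\ge v-k\Big].
\end{align*}
Bounding $\mathbb{P}[\hat S\ge v-k]\le \alpha k$ and summing gives $\beta+\alpha\,\mathbb{E}[F]\le\beta+\alpha\delta\sum_i|V_i|$.

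The main obstacle is rigor in this last step. The decomposition must be done so that the dominance of $\sum_i G_i$ remains valid jointly with the value of $F$. Steinke et al.\ handle this with a ``coupled'' construction, and conditioning on $T=t$ makes the expectation argument delicate. I would first try to mirror Proposition 5.7 verbatim: define the modified attacker, which abstains whenever $E=0$, and then bound the probability of the deviation by the expected number of abstentions.
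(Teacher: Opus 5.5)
Your per-coordinate construction (the mixture $Q_a$, the lemma with $p=1/|V_i|$, and summing the disjoint events $\{S_i=a,\,E_a=1\}$ over the top $r_i$ candidates) is sound and matches the paper. So is the induction giving $\sum_i G_i\preceq\hat S$ given $T=t$. The problems are in the last step. First, the statement conditions on $T=t$ for every $t$ in the support, and for $\delta>0$ that version is false, so no refinement of your ``delicate'' step can reach it. Take $m=1$, $V_1=\{a,b\}$, $r_1=1$, $\varepsilon=0$. Let $M(b)$ always output the permutation $(b,a)$, and let $M(a)$ output $(a,b)$ with probability $\delta$ and $(b,a)$ otherwise. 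This mechanism is $(0,\delta)$-DP, and $t=(a,b)$ is in the support. Since $T=t$ forces $S_1=a$, the left-hand side at $v=1$ equals $1$, while $\beta+\alpha\delta|V_1|=\tfrac12+2\delta<1$ for $\delta<1/4$. The reason is that $\mathbb{E}[F]\le\delta\sum_i|V_i|$ holds only on average over $T$, and failures can concentrate on a single output. What your argument can actually target is the averaged bound $\mathbb{P}_{S,T}[\sum_i\mathbbm{1}[\mathrm{rank}(T_i,S_i)\le r_i]\ge v]\le\beta+\alpha\delta\sum_i|V_i|$. This is the form of Theorem 5.2 in \citet{steinke2023privacy}, and it is all an audit needs, since $\hat S$ does not depend on $t$. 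The paper's proof has the same defect: its linear program imposes $\mathbb{E}[F(t)]\le\delta\sum_i|V_i|$ for a fixed $t$, which was only established unconditionally.

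Second, even for the averaged bound, the factorization $\mathbb{P}[F=k,\ \sum_iG_i\ge v-k]\le\mathbb{P}[F=k]\,\mathbb{P}[\hat S\ge v-k]$ does not follow from what you proved. The induction gives domination of $\sum_iG_i$ only marginally (given $T$), not conditionally on $F$. In fact $G_i=1$ forces $F_i=0$, so conditioning on $F_i=0$ raises the success probability of $G_i$ to $\mathbb{P}[G_i=1]/\mathbb{P}[F_i=0]$, which can exceed $r_ie^\varepsilon/(|V_i|-1+e^\varepsilon)$. From $\sum_iG_i\preceq\hat S$ and $\mathbb{E}[F]\le\Delta$ alone you only get $\beta+\mathbb{P}[F\ge 1]\le\beta+\Delta$, i.e.\ the bound with $\alpha$ replaced by $1$. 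To obtain $\alpha$ you must show that the count is dominated by a convolution of $\hat S$ with an independent $F$; the paper, following Steinke et al., asserts exactly this before its duality step. Relatedly, your account of the factor $|V_i|$ does not match your own definition. With $F_i=\mathbbm{1}[E_{S_i}=0]$ the lemma gives $\mathbb{P}[F_i=1]\le\delta$, hence $\mathbb{E}[F]\le m\delta$. The paper's $|V_i|\delta$ instead comes from the product test $\prod_{a\in V_i}E_{M(s_{<i},a),\hat M(s_{<i},a)}$. That test depends on $(s_{<i},T)$ but not on $S_i$, so the failure indicator is fixed before $S_i$ is revealed. The paper's convolution step is built on that structure, and your $S_i$-dependent $F_i$ lacks it.
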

\Cref{theo:main_theorem} shows the analogous result of \Cref{theorem:pure_dp_theorem} using $(\varepsilon, \delta)$-DP.

Now, we show the proof of \Cref{theo:main_theorem}.

\begin{proof}
Our analysis follows Proposition 5.7 and Theorem 5.2 by~\citet{steinke2023privacy}.

For $i \in \{0, \dots, m\}$ and $s_{\leq i} \in V_1 \times \cdots \times V_i$, let $M(s_{\leq i})$ denote the distribution on $\Sigma_1 \times \cdots \times \Sigma_m$ obtained by conditioning $M(S)$ on $S_{\leq i} = s_{\leq i}$. We can express this as a convex combination:

\begin{align*}
M(s_{\leq i}) = \sum_{s_{>i} \in V_i \times \cdots \times V_m} M(s_{\leq i}, s_{>i}) \cdot \mathbb{P}_{S_{>i} \leftarrow V_i \times \cdots \times V_m}[S_{>i} = s_{>i}].
\end{align*}

Additionally, for all $i \in \{1, \dots, m\}$, and $a \in V_i$, we define $\hat{M}(s_{\leq i}, a)$ as the distribution on $\Sigma_1 \times \cdots \times \Sigma_m$ obtained by conditioning on $S_{\leq i} = s_{\leq i}$ and $S_{i+1}\neq a$, as follows:

\begin{align*}
\hat{M}(s_{\leq i}, a) =  \sum_{b \in V_i, a \neq b} \frac{1}{|V_i|-1}M(s_{\leq i}, b).
\end{align*}

We define $S \leftarrow V_1 \times \cdots \times V_m$ to represent uniform sampling over  $V_1 \times \cdots \times V_m$. For all $i \in \{1, \dots, m\}$, we have that the distributions $P$ and $Q$ on $\Sigma_1, \dots, \Sigma_m$, and let $E_{P, Q}: \Sigma_1, \dots, \Sigma_m \to \{0,1\}$ be the randomized function given by \Cref{lemma:5.6} (using $p=\frac{1}{|V_i|}$). Specifically, all $s_{\leq i} \in V_1 \times \cdots \times V_i$, all $t \in \Sigma_1 \times \cdots \times \Sigma_m$, and all $a \in V_i$, we have
\begin{align*}
& \mathbb{P}_{S \leftarrow V_1 \times \cdots \times V_m, T \leftarrow M(S),E}[S_i = a \wedge E_{M(s_{< i},a), \hat{M}(s_{< i},a)}(T) = 1|S_{\leq i} = s_{\leq i}, T = t] \leq \frac{e^{\varepsilon}}{|V_i|-1 + e^{\varepsilon}}, \\
& \mathbb{E}_{S \leftarrow V_1 \times \cdots \times V_m, T \leftarrow M(S),E}[E_{M(s_{< i},a), \hat{M}(s_{< i},a)}(T)|S_{\leq i} = (s_{< i}, a)] \geq 1-\delta.  
\end{align*}

For simplicity, for all $i \in \{1, \dots, m\}$, we define $E_{M(s_{<i}, V_i)}(y) := \prod_{a \in V_i}{E_{M(S_{<i}, a), \hat{M}(S_{<i}, a)}(y)}$

and, for $b \in V_i$, we have

\begin{align*}
\mathbb{E}_{S \leftarrow V_1 \times \cdots \times V_m, T \leftarrow M(S),E}[E_{M(s_{<i}, V_i)}(T)|S_{\leq i} = (s_{< i},b)] \geq 1-|V_i|\delta.    
\end{align*}

For all $a \in V_i$, let $j := \mathrm{rank}(t_i,a)$, where we use 1-based ranks: rank 1 corresponds to the highest-scoring element, rank 2 to the next, and so on. So we can rewrite
\begin{align*}
& \mathbb{P}_{S \leftarrow V_1 \times \cdots \times V_m, T \leftarrow M(S),E}[S_i = a \wedge E_{M(s_{< i},V_i)}(T) = 1|S_{\leq i} = s_{\leq i}, T = t] \\
& = \mathbb{P}_{S \leftarrow V_1 \times \cdots \times V_m, T \leftarrow M(S),E}[\mathrm{rank}(t_i,S_i) = j]\wedge E_{M(s_{< i},V_i)}(T) = 1|S_{\leq i} = s_{\leq i}, T = t].
\end{align*}
Note that there is a bijective relationship between $a$ and $j$. Therefore, we have that
\begin{align*}
\mathbb{P}_{S \leftarrow V_1 \times \cdots \times V_m, T \leftarrow M(S),E}[\mathrm{rank}(t_i,S_i) \leq r_i \wedge E_{M(s_{< i},V_i)}(T) = 1|S_{\leq i} = s_{\leq i}, T = t]
\leq \frac{r_i e^{\varepsilon}}{|V_i|-1 + e^{\varepsilon}}.
\end{align*}

For $j \in \{1, \dots, m\}$, $s \in V_i \times \cdots \times V_m$, and $t \in \Sigma_1 \times \cdots \times \Sigma_m$, define

\begin{align*}
& \widetilde{W}_j(s,t) := \sum_{i<j} \mathbbm{1}[\mathrm{rank}(t_i,S_i) \leq r_i] \cdot E_{M(s_{< i},V_i)}(t) = \sum_{i<j} \mathbbm{1}[\mathrm{rank}(t_i,S_i) \leq r_i \wedge E_{M(s_{< i},V_i)}(t) = 1] \\
& \hat{W}_j(t) = \sum_{i \in [j]} S_i(t),
\end{align*}
where, for each $i \in \{1, \dots, m\}$ independently, $S(t)_i \leftarrow \mathrm{Bernoulli}\left(\frac{r_i e^{\varepsilon}}{|V_i|-1 + e^{\varepsilon}}\right)$

By induction and \Cref{lemma:4.9}, for any $j \in \{1, \dots, m\}$ and $t \in \Sigma_1 \times \cdots \times \Sigma_m$, the conditional distribution $(\widetilde{W}_m(S,t)|M(S) = t)$ where $S \leftarrow V_1 \times \cdots \times V_{m}$ is stochastically dominated by $\hat{W}_m(t)$.

For $s \in V_1 \times \cdots \times V_m$ and $t \in \Sigma_1 \times \cdots \times \Sigma_m$, define
\begin{align*}
F(s,t) := \sum_{i=1}^m \mathbb{1}\left[E_{M(s_{< i},V_i)}(t) = 0\right],
\end{align*}
so that
\begin{align*}
W_m(s,t) := \sum_{i=1}^{m} \mathbbm{1}[\mathrm{rank}(t_i,S_i) \leq r_i] \leq \hat{W}_m(s,t) + F(s,t).
\end{align*}

Since the conditional distribution $(W_m(S,t)|M(S) = t)$, where $S \leftarrow V_1 \times \cdots \times V_m$ is stochastically dominated by $W_m(t)$, $W_m$ is stochastically dominated by the convolution $\hat{W}_m(T) + F(S,T)$.
Finally, $F(s,t)$ is supported on $\{0,1,\dots,m\}$ and
\begin{align*}
\mathbb{E}[F(s,t)] = \sum_{i=1}^{m} \mathbb{P}[E_{M(s_{< i},a), \hat{M}(s_{< i},a)}(T) = 0] \leq \delta \sum_{i=1}^{m}{|V_i|}.
\end{align*}
Since $\hat{W}_m(T)$ does not depend on $S$, the input $S$  does not contribute to the dependence between $F(S, T)$  and $W_m(T)$, so we can elide this input in the statement, that is, $F(T) = F(S, T)$ for $S$  drawn from an appropriate distribution.

Given these constraints, we can formulate finding the optimal distribution $F(t)$  for a given $t \in \Sigma_1 \times \cdots \times \Sigma_m$  and $v \in \mathbb{R}$ as a linear program:

\begin{align*}
& \text{maximize} && \mathbb{P}_{\check{W},F} [\check{W}(t) + F(t) \geq v] - \sum_{i=0}^m \mathbb{P}[F(t) = i] \cdot \mathbb{P} [\check{W}(t) \geq v - i] \\
& \text{subject to} && \mathbb{E}_F[F(t)] = \sum_{i=0}^m \mathbb{P}_F[F(t) = i] \cdot i \leq \delta \sum_{i=1}^{m}{|V_i|}, \\
&   && \sum_{i=0}^m \mathbb{P}_F[F(t) = i] = 1\text{, and} \\
&   && \mathbb{P}_F[F(t) = i] \geq 0 \quad \forall i \in \{0,1,\dots,m\},
\end{align*}
where $\check{W}(t) := \sum_{i=1}^m \mathbbm{1}[\mathrm{rank}(t_i,S_i) \leq r_i]$  for $S_i \leftarrow \mathrm{Bernoulli}\left(\frac{r_i e^{\varepsilon}}{|V_i|-1 + e^{\varepsilon}}\right)^m$.

By strong duality, the linear program above has the same value as its dual:
\begin{align*}
& \text{minimize} && \alpha \cdot\delta \sum_{i=1}^{m}{|V_i|} + \beta \\
& \text{subject to} && \alpha \cdot i + \beta \geq \mathbb{P}_{\check{W}} [\check{W}(t) \geq v - i] \quad \forall i \in \{0,1,\dots,m\}, \\
&&& \alpha \geq 0.
\end{align*}
Any feasible solution to the dual gives an upper bound on the primal. So, in particular, we can use the solution provided by
\begin{align*}
\beta &= \mathbb{P}_{\check{W}^*} [\check{W}^* \geq v], \\
\alpha &= \max\left(\{0\} \cup \left\{\frac{1}{i}\left(\mathbb{P}_{\check{W}^*}[\check{W}^* \geq v - i] - \beta\right) : i \in \{1,2,\dots,m\}\right\}\right),
\end{align*}
where $\check{W}^*$  is a distribution on $\mathbb{R}$  that satisfies $\mathbb{P}_{\check{W}^*} [\check{W}^* \geq v - i] \geq \mathbb{P}_{\check{W}} [\check{W}(t) \geq v - i]$  for all $i \in \{0,1,\dots,m\}$ and all $t$ in the support of $T$.
\end{proof}

\section{Sample Complexity Analysis on the Cardinality}\label{app:sample_complexity}

A natural question is what advantage arises from increasing the cardinality $c = |V_i|$ (for simplicity we assume that all sets $V_i$ have the same cardinality).  
By \Cref{theorem:pure_dp_theorem}, the probability that a mechanism produces a correct guess within the top-$r$ elements is stochastically dominated by a Bernoulli random variable with success probability
\[
p = \frac{r e^{\varepsilon}}{c - 1 + e^{\varepsilon}}, \qquad \text{with } p \leq 1.
\]
Thus, if we consider $m$ independent guesses, the total number of correct guesses is stochastically dominated by a $\mathrm{Binomial}(m, p)$ random variable.  

Applying the Bernstein inequality to this binomial distribution yields the following tail bound: for any $\beta \in (0,1)$,
\[
\mathbb{P}\!\left[\frac{\hat S}{m} \,\geq\, p \,+\, \frac{1}{3m}\log\!\left(\tfrac{1}{\beta}\right)
+ \sqrt{ \tfrac{1}{9m}\log^2\!\left(\tfrac{1}{\beta}\right) \,+\, \frac{2p(1-p)}{m}\log\!\left(\tfrac{1}{\beta}\right)} \right] \leq \beta,
\]
where $\hat S \sim \mathrm{Binomial}(m,p)$ and $\hat S/m$ represents the empirical accuracy (i.e., the observed fraction of correct guesses).  

On the right-hand side of this inequality, the first term $p$ corresponds to the expected accuracy, while the remaining terms form the concentration margin. Among these, the dominant contribution for large $m$ and $c$ is
\[
\sqrt{\frac{2p(1-p)}{m}\log\!\left(\tfrac{1}{\beta}\right)}.
\]

To understand the scaling, observe that for fixed $r$ and $\varepsilon$, we have
\[
p = \Theta\!\left(\tfrac{1}{c}\right), 
\qquad p(1-p) = \Theta\!\left(\tfrac{1}{c}\right).
\]
Consequently, the concentration margin decays as
\[
\Theta\!\left(\sqrt{\tfrac{1}{mc}}\right).
\]
This shows that the accuracy concentrates faster as the cardinality $c$ increases: compared to the binary case ($c=2$), the deviation shrinks by a factor of $\sqrt{2/c}$. In other words, larger cardinalities yield tighter accuracy concentration bounds, providing a clear sample complexity improvement over the $1$-out-of-$2$ setting.

\section{Randomized Response Analysis} \label{app:randomized_response_analysis}
In the following subsections, we analyze in detail our novel auditing method using randomized response.

\subsection{Randomized Response formalization}\label{app:randomized_response_settings}
We now provide the complete derivation of the auditing bound for randomized response in our setting.
Formally, we are given $m$ samples, each corresponding to a private integer $v_i \in \{1,\dots,c\}$. 
The randomized response mechanism releases
\[
y_i =
\begin{cases}
v_i & \text{with probability } \tfrac{1}{c} + \gamma, \\
a & \text{with probability } \tfrac{1}{c} - \tfrac{\gamma}{c-1}, \quad \forall a \neq v_i,
\end{cases}
\]
where $\gamma = \frac{e^\varepsilon-1}{c \left(1 + \tfrac{e^\varepsilon}{c-1}\right)}$ ensures $\varepsilon$-DP. 

The auditor ranks the $c$ possible values from most to least likely. Since $y_i$ is always the most likely input to produce itself, the optimal strategy is to rank $y_i$ first and order the remaining values randomly. 
The probability of a correct guess is therefore 
\[
\mathbb{P}[\text{correct}] = \tfrac{e^\varepsilon}{c-1+e^\varepsilon},
\]
which exactly matches the bound of \Cref{theorem:pure_dp_theorem}. 

\subsection{Additional Randomized Response Experiments for $r > 1$}\label{app:randomized_response_experiment_r}

\Cref{fig:app randomized response 1} and \Cref{fig:app randomized response 8} show additional results for the randomized response setting. We highlight that our method is tight for rank threshold $r=1$, and the higher the $\varepsilon$, the larger the improvement given by a larger cardinality $c$.

\begin{figure}[h]
    \centering
     \begin{subfigure}[b]{0.49\textwidth}
        \includegraphics[width=1.0\linewidth]{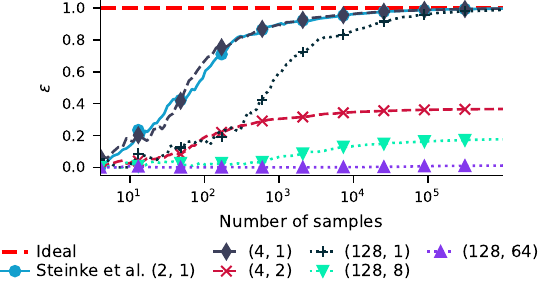}
        \caption{$\varepsilon=1$}
        \label{fig:app randomized response 1}
     \end{subfigure}
     \begin{subfigure}[b]{0.49\textwidth}
        \centering
        \includegraphics[width=1.0\linewidth]{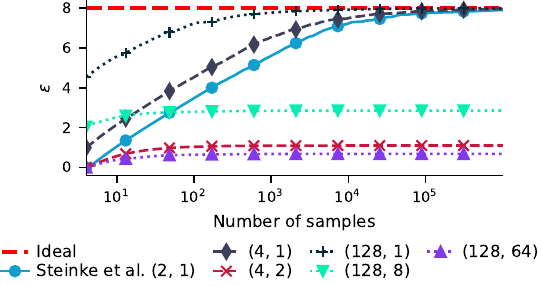}
        \caption{$\varepsilon=8$}
        \label{fig:app randomized response 8}
     \end{subfigure}
    \caption{\textbf{Randomized response} mechanism with $\varepsilon=\{1,8\}$. The red dashed line indicates the real $\varepsilon$ of the mechanism, while other ones indicate the estimated lower bound of $\varepsilon$ with 99\% confidence for different choices of cardinality $c$, and rank threshold $r$. The (2,1) case corresponds to the method proposed by~\citet{steinke2023privacy}. Each label is written as (cardinality $c$, rank threshold $r$). }   
    \label{fig:randomized response}
\end{figure}

In the specific case of randomized response, $r>1$ is not tight, as there is no further information to exploit, as the attacker's best response is to give the mechanism response as the first choice and the other ones in random order.
The randomized response mechanism returns a random value with a small bias towards the private one. From the auditor's point of view, the best attack returns the privatized value as the first option and the others in a random permutation. This means that the first value has some information about the private value, while the other ones have no information.
Specifically, the probability of the first sample being the private sample is $\frac{e^\varepsilon}{c-1+e^\varepsilon}$, while for the other ones it is $\frac{1}{c-1+e^\varepsilon}$ (these results come from the randomized response output distribution).
If we consider a certain threshold $r$, \Cref{theorem:pure_dp_theorem} roughly states that the probability of being correct is bounded by $\frac{r e^\varepsilon}{c-1+e^\varepsilon}$.
However, based on our attack, the probability that the correct value is in the top-$r$ is $\frac{e^\varepsilon}{c-1+e^\varepsilon} + \frac{r-1}{c-1+e^\varepsilon}$.
For $r=1$, we can observe that the two results match, while for $r > 1$, the attacker's probability is always strictly smaller than the ideal one (except for $\varepsilon=0$).
\Cref{theorem:pure_dp_theorem} and \Cref{theo:main_theorem} give this additional flexibility of selecting the top-$r$ threshold; however, depending on the setting, this might be more or less useful.

\section{DP-SGD Auditing} 
In the following subsections, we show additional experiments for DP-SGD auditing and the pseudocode of the auditing procedure.

\subsection{Further Experiments on DP-SGD Auditing} \label{app:extra-dp-sgd}
\Cref{fig:dp-audit-pythias} shows results for experiments conducted following settings described in \Cref{sec:auditing} for other Pythia models (70m and 160m)~\citep{biderman2023pythia}. The experiments substantiate observations from larger models, and the proposed framework constantly outperforms the baseline method proposed by~\citet{steinke2023privacy}.

\begin{figure}[h]
    \centering
     \begin{subfigure}[b]{0.45\textwidth}
    \centering
    \includegraphics[width=\textwidth]{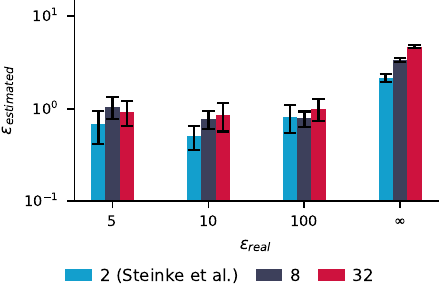}
    \caption{{Pythia-160m}}
    \end{subfigure}
         \begin{subfigure}[b]{0.45\textwidth}
    \centering
    \includegraphics[width=\textwidth]{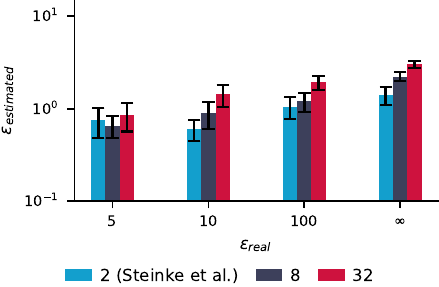}
    \caption{{Pythia-70m}}
    \end{subfigure}
    \caption{
     \textbf{Impact of cardinality ($c = \{2, 8, 32\}$) on $ \varepsilon$ estimation} for other Pythia models.
    }
    \label{fig:dp-audit-pythias}
\end{figure}

{Moreover, we explore different values of the $r$ parameter to estimate the lower bound of $\varepsilon$. The results shown in \Cref{fig:dp-different_r} confirm our choice of parameter $r=1$, thus providing the tightest and most reliable outcomes for our post-hoc DP auditing framework with NIDs.

\begin{figure}[h]
    \centering
     \begin{subfigure}[b]{0.45\textwidth}
    \centering
    \includegraphics[width=\textwidth]{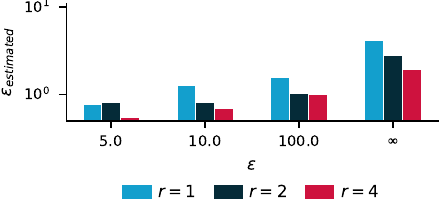}
    \caption{$c=32$}
    \label{fig:dp_audit_attacks}
    \end{subfigure}
         \begin{subfigure}[b]{0.45\textwidth}
    \centering
    \includegraphics[width=\textwidth]{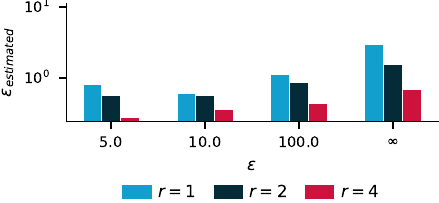}
    \caption{$c=8$}
    \label{fig:dp_auditing_number__of_canaries}
    \end{subfigure}
    \caption{
     \textbf{Impact of rank $r = \{1, 2, 4\}$ on $ \varepsilon$ estimation} for Pythia-1b.
    }
    \label{fig:dp-different_r}
\end{figure}

\subsection{\reb{Confidence Intervals Across 4 Random Seeds for Dp Auditing}}\label{app:seed_dp}
\reb{\Cref{tab:seed_dp} shows the confidence interval across 4 random seeds using Pythia-1b. The results show a low standard deviation across all of the settings.}

\begin{table}[h]
    \centering
    \small
    \caption{\reb{\textbf{DP auditing across 4 seeds.} Mean estimated $\varepsilon$ for Pythia-1b computed across 4 seeds.}}
    \label{tab:seed_dp}
    \begin{tabular}{c c c}
    \hline
    \textbf{cardinality} & $\boldsymbol{\varepsilon}$ & \textbf{Estimated }$\boldsymbol{\varepsilon}$ \\
    \hline
    2  & 5        & $0.086 \pm 0.021$ \\
    2  & $\infty$ & $0.979 \pm 0.028$ \\
    2  & 10       & $0.106 \pm 0.020$ \\
    2  & 100      & $0.245 \pm 0.023$ \\
    8  & 5        & $0.720 \pm 0.144$ \\
    8  & 10       & $0.789 \pm 0.144$ \\
    8  & 100      & $1.094 \pm 0.129$ \\
    8  & $\infty$ & $2.329 \pm 0.058$ \\
    32 & 5        & $1.761 \pm 0.226$ \\
    32 & 10       & $1.830 \pm 0.231$ \\
    32 & 100      & $2.178 \pm 0.218$ \\
    32 & $\infty$ & $3.775 \pm 0.067$ \\
    \hline
    \end{tabular}
\end{table}

\subsection{Pseudocode for DP-SGD auditing\label{app:pseudocode}}
\Cref{algo:dp-auditing} summarizes our approach for auditing DP-SGD using the results given by \Cref{theo:main_theorem}. We highlight that when for all $i \in \{1, \dots, m\}$, we have $|V_i| = 2$ and $r_i=1$, the algorithm is equivalent to the fixed-length dataset case proposed by~\citet{steinke2023privacy}.

\newcommand{\smallspace}{\hspace{0.3cm}}
\newcommand{\smallerspace}{\hspace{0.2cm}}
\newcommand{\DEFINE}[1]{\STATE \textbf{Define:} #1 }
\newcommand{\RETURN}[1]{\STATE \textbf{Return:} #1 }
\newcommand{\STATESPACE}[1]{\STATE \smallerspace{} #1}
\newcommand{\STATEQUAD}[1]{\STATE \quad #1}
\newcommand{\WHILESPACE}[1]{\STATE \smallerspace{} \textbf{while} #1 \textbf{do}}
\newcommand{\ENDWHILESPACE}{\STATE \smallerspace{} \textbf{end while}}
\begin{algorithm}[H]
\caption{Adapted version of the black-box DP-SGD Auditor algorithm proposed by~\citet{steinke2023privacy} for fixed-length dataset with \NIDs. \label{algo:dp-auditing}}
\begin{algorithmic}[1]
    \REQUIRE Dataset $D_0$, sets of canaries $V = \{V_1, \dots, V_m\}$, the target ranks $r_1, \dots, r_m$, and the DP-SGD settings
    
    \FOR{$i \in \{1, \dots, m\}$}
        \STATE $S_i \gets \mathrm{Unif}\{V_i\}$
    \ENDFOR
    \STATE $D_1 := \{V_{i,S_i} : i \in \{1, \dots, m\}\}$
    \STATE $D = D_0 \cup D_1$

    \STATE Run DP-SGD on $D$ with given parameters, yielding $\{w^0, w^1, \dots, w^{\ell}\}$
    \FOR{$i \in \{1, \dots, m\}$}
        \STATE $Y_{i,j} \gets \mathrm{SCORE}(V_{i,j}; w^{\ell}) \quad \forall j \in [|V_i|]$
        \STATE $T_i \gets \mathrm{argsort}(Y_{i,j} \forall j \in [|V_i|])$
    \ENDFOR
    \STATE $c \gets 0$
    \FOR{$i \in \{1, \dots, m\}$}
        \IF{$T_{i,S_i} \leq r_i $}
            \STATE $c \gets c + 1$
        \ENDIF
    \ENDFOR
    \STATE \textbf{return} Compute $\varepsilon_\text{lower}$ using the formula given by \Cref{theo:main_theorem}
\end{algorithmic}
\end{algorithm}

\section{Additional Evaluation of MIAs Performance} \label{app:mia performance}

\Cref{tab:mia-12b}, \Cref{tab:mia-6.9b} and \Cref{tab:mia-2.8b} show the MIA performance of the individual MIAs on the subsets of the Pile using the \NIDs, where the goal is to distinguish the real from the generated ones.
Furthermore, for completeness, we have \Cref{tab:mia-12b-tpr}, \Cref{tab:mia-6.9b-tpr}, \Cref{tab:mia-2.8b-tpr}, and \Cref{tab:mia-olmo7b-tpr} that show the MIA performance using TPR @ 1\% FPR.

\begin{table*}[h!]
    \centering
    \caption{\textbf{MIAs on \NIDs for Pythia-12b.} The AUC for MIAs between the \NIDs and the corresponding \GIDs on various subsets of the Pile dataset.
    }
    \label{tab:mia-12b}
   \resizebox{\textwidth}{!}{
\begin{tabular}{lcccccccccccc|cc}
\toprule
& \multicolumn{2}{c}{\textbf{}} & \multicolumn{2}{c}{\textbf{}} & \multicolumn{2}{c}{\textbf{Stack}} & \textbf{Ubuntu} & \textbf{Wiki-} & \textbf{PubMed} & \textbf{Hacker} & \textbf{Pile} & \textbf{} & \multicolumn{2}{c}{\textbf{}}  \\
& \multicolumn{2}{c}{\textbf{Full Pile}} & \multicolumn{2}{c}{\textbf{Github}} & \multicolumn{2}{c}{\textbf{Exchange}} & \textbf{IRC} & \textbf{pedia(en)} & \textbf{Central} & \textbf{News} & \textbf{CC} & \textbf{ArXiv} & \multicolumn{2}{c}{\textbf{Average}}  \\
\textbf{MIA} & Train & Test & Train & Test & Train & Test & Train & Train & Train & Train & Train  & Train & Train & Test \\
\midrule
Loss & 58.6 & 50.3 & \textbf{71.8} & 51.1 & 50.3 & 50.9 & 50.3 & 50.6 & 50.6 & 60.5 & 51.1 & 50.4 & 54.9 & 50.7 \\
Min-K\% & 57.6 & 51.0 & 68.4 & 50.6 & 50.7 & 51.2 & \textbf{51.1} & 50.6 & 50.7 & 60.5 & 52.3 & \textbf{51.0} & 54.8 & 50.9 \\
Min-K\%++ & 56.9 & \textbf{51.4} & 71.2 & 50.3 & \textbf{50.8} & \textbf{51.9} & 51.1 & 51.3 & \textbf{51.1} & \textbf{69.7} & \textbf{53.2} & 50.9 & \textbf{56.2} & \textbf{51.2} \\
ReCALL & 53.5 & 50.2 & 50.6 & 50.3 & 50.0 & 51.1 & 50.3 & 51.3 & 50.2 & 57.8 & 50.1 & 50.2 & 51.6 & 50.5 \\
ReCALL(Hinge) & 51.3 & 50.1 & 53.3 & 50.4 & 50.4 & 51.4 & 50.5 & \textbf{51.9} & 50.8 & 50.3 & 50.4 & 50.0 & 51.0 & 50.6 \\
Hinge & \textbf{58.7} & 50.5 & 71.8 & \textbf{51.5} & 50.4 & 50.5 & 50.4 & 50.4 & 50.5 & 60.8 & 50.9 & 50.4 & 54.9 & 50.8 \\
\bottomrule
\end{tabular}
}
\end{table*}

\begin{table*}[ht]
    \centering
    \tiny
    \caption{\textbf{MIAs on \NIDs for OLMo-7B.} The AUC for MIAs between the \NIDs and the corresponding \GIDs on various subsets of the Dolma dataset. All but Proof Pile 2 (Test) are part of the training data of Dolma.}
    \label{tab:mia-olmo7b}
\begin{tabular}{lcccccccc|c}
\toprule
 & \multicolumn{8}{c}{\textbf{Dolma}} & \multicolumn{1}{c}{\textbf{Average}}  \\
\textbf{MIA} & C4 & PeS2o & MegaWika & ArXiv & RefinedWeb & Algebraic Stack & OpenWebMath & Proof Pile 2 (Test) & Train \\
\midrule
Loss & 50.1 & 50.2 & 50.2 & 51.2 & 50.1 & 50.0 & 50.9 & 50.6 & 50.4 \\
Min-K\% & 50.1 & 50.2 & 50.5 & 51.3 & 50.1 & 50.5 & \textbf{51.7} & \textbf{51.3} & 50.6 \\
Min-K\%++ & \textbf{50.4} & 50.2 & 50.0 & 50.7 & 50.1 & 50.2 & 50.8 & 51.0 & 50.3 \\
ReCALL & 50.2 & 50.9 & \textbf{51.0} & 50.7 & 50.1 & 50.4 & 51.0 & 51.0 & 50.6 \\
ReCALL (Hinge) & 50.3 & \textbf{51.4} & 50.2 & \textbf{51.9} & \textbf{50.2} & \textbf{50.7} & 50.2 & 51.0 & \textbf{50.7} \\
Hinge & 50.1 & 50.2 & 50.2 & 50.9 & 50.1 & 50.0 & 50.7 & 51.0 & 50.3 \\
\bottomrule
\end{tabular}
\end{table*}

\begin{table*}[h]
    \centering
    \tiny
    \caption{\textbf{MIAs on \NIDs for Pythia-6.9b.} The AUC for MIAs between the \NIDs and the corresponding \GIDs on various subsets of the Pile dataset.
    }
    \label{tab:mia-6.9b}
    \resizebox{\textwidth}{!}{
\begin{tabular}{lcccccccccccc|cc}
\toprule
& \multicolumn{2}{c}{\textbf{Full Pile}} & \multicolumn{2}{c}{\textbf{Github}} & \multicolumn{2}{c}{\textbf{StackExchange}} & \textbf{UbuntuIRC} & \textbf{Wikipediaen} & \textbf{PubMedCentral} & \textbf{HackerNews} & \textbf{Pile-CC} & \textbf{ArXiv} & \multicolumn{2}{c}{\textbf{Average}}  \\
\textbf{MIA} & Train & Test & Train & Test & Train & Test & Train & Train & Train & Train & Train  & Train & Train & Test \\
\midrule
Loss & 57.6 & 50.4 & \textbf{69.9} & 51.1 & 50.3 & 50.6 & 50.3 & 50.7 & 50.7 & 61.7 & 50.8 & 50.6 & 54.7 & 50.7 \\
Min-K\% & 56.0 & 51.0 & 65.7 & 50.5 & 50.8 & \textbf{51.4} & 50.9 & 50.6 & 50.9 & 63.2 & 51.8 & 50.7 & 54.5 & 51.0 \\
Min-K\%++ & 55.1 & 51.3 & 69.3 & 50.5 & \textbf{51.3} & 50.4 & \textbf{51.4} & \textbf{51.8} & \textbf{51.6} & \textbf{74.5} & \textbf{52.8} & \textbf{51.8} & \textbf{56.6} & 50.7 \\
ReCALL & 52.4 & \textbf{51.4} & 55.9 & 51.1 & 50.1 & 51.0 & 50.1 & 50.5 & 50.4 & 60.3 & 50.3 & 50.7 & 52.3 & \textbf{51.2} \\
ReCALL (Hinge) & 51.2 & 50.6 & 53.2 & 51.2 & 50.1 & 50.1 & 51.0 & 50.9 & 50.1 & 52.6 & 50.0 & 50.0 & 51.0 & 50.6 \\
Hinge & \textbf{57.7} & 50.7 & 69.9 & \textbf{51.6} & 50.4 & 50.1 & 50.2 & 50.0 & 50.7 & 61.7 & 50.7 & 50.3 & 54.6 & 50.8 \\
\bottomrule
\end{tabular}
}
\end{table*}

\begin{table*}[h]
    \centering
    \tiny
    \caption{\textbf{MIAs on \NIDs for Pythia-2.8b.} The AUC for MIAs between the \NIDs and the corresponding \GIDs on various subsets of the Pile dataset.
    }
    \label{tab:mia-2.8b}
    \resizebox{\textwidth}{!}{
\begin{tabular}{lcccccccccccc|cc}
\toprule
& \multicolumn{2}{c}{\textbf{Full Pile}} & \multicolumn{2}{c}{\textbf{Github}} & \multicolumn{2}{c}{\textbf{StackExchange}} & \textbf{UbuntuIRC} & \textbf{Wikipediaen} & \textbf{PubMedCentral} & \textbf{HackerNews} & \textbf{Pile-CC} & \textbf{ArXiv} & \multicolumn{2}{c}{\textbf{Average}}  \\
\textbf{MIA} & Train & Test & Train & Test & Train & Test & Train & Train & Train & Train & Train  & Train & Train & Test \\
\midrule
Loss & 52.8 & 50.0 & 58.9 & 50.4 & 50.2 & 50.2 & 50.1 & 50.5 & 50.5 & 60.3 & 50.8 & 50.6 & 52.8 & 50.2 \\
Min-K\% & 52.1 & \textbf{52.4} & 59.5 & \textbf{52.9} & 50.6 & 50.3 & 50.2 & 50.1 & \textbf{50.6} & 61.6 & 51.6 & 50.5 & 53.0 & \textbf{51.8} \\
Min-K\%++ & 50.3 & 52.3 & 58.2 & 50.6 & \textbf{50.9} & 50.1 & 50.2 & 50.2 & 50.3 & \textbf{73.6} & \textbf{52.8} & \textbf{51.4} & \textbf{54.2} & 51.0 \\
ReCALL & \textbf{53.7} & 51.1 & \textbf{64.4} & 52.2 & 50.1 & 50.1 & 50.2 & 50.8 & 50.5 & 58.0 & 50.2 & 51.1 & 53.2 & 51.2 \\
ReCALL (Hinge) & 50.9 & 50.6 & 50.9 & 50.8 & 50.5 & \textbf{50.7} & \textbf{50.9} & \textbf{52.3} & 50.2 & 51.3 & 50.2 & 50.1 & 50.8 & 50.7 \\
Hinge & 53.0 & 50.4 & 58.9 & 51.1 & 50.3 & 50.3 & 50.2 & 50.2 & 50.5 & 59.9 & 50.7 & 50.4 & 52.7 & 50.6 \\
\bottomrule
\end{tabular}
}
\end{table*}

\begin{table*}[h!]
    \centering
    \tiny
    \caption{\textbf{MIAs on \NIDs for Pythia-12b.} The TPR @ 1\% FPR for MIAs between the \NIDs and the corresponding \GIDs on various subsets of the Pile dataset.
    }
    \label{tab:mia-12b-tpr}
    \resizebox{\textwidth}{!}{
\begin{tabular}{lcccccccccccc|cc}
\toprule
& \multicolumn{2}{c}{\textbf{Full Pile}} & \multicolumn{2}{c}{\textbf{Github}} & \multicolumn{2}{c}{\textbf{StackExchange}} & \textbf{UbuntuIRC} & \textbf{Wikipediaen} & \textbf{PubMedCentral} & \textbf{HackerNews} & \textbf{Pile-CC} & \textbf{ArXiv} & \multicolumn{2}{c}{\textbf{Average}}  \\
\textbf{MIA} & Train & Test & Train & Test & Train & Test & Train & Train & Train & Train & Train  & Train & Train & Test \\
\midrule
Loss & 1.2 & 0.0 & 1.9 & 0.0 & 1.0 & 0.1 & 0.0 & 0.1 & 0.5 & 0.1 & 0.9 & 0.3 & 0.7 & 0.0 \\
Min-K\% & 1.1 & 0.0 & 1.6 & 0.0 & \textbf{1.0} & \textbf{1.8} & 0.3 & 0.9 & 1.0 & 0.2 & 0.9 & 0.6 & 0.9 & 0.6 \\
Min-K\%++ & \textbf{1.3} & 1.1 & \textbf{2.0} & 1.1 & 0.8 & 1.3 & 0.4 & 0.9 & 1.9 & 0.8 & 1.3 & 0.4 & 1.1 & 1.2 \\
ReCALL & 1.2 & 0.2 & 1.5 & 0.0 & 1.0 & 1.5 & \textbf{1.4} & 0.7 & 0.8 & 0.9 & \textbf{1.9} & 1.0 & 1.1 & 0.5 \\
ReCALL (Hinge) & 1.1 & \textbf{1.2} & 1.9 & \textbf{1.5} & 0.6 & 1.3 & 0.5 & \textbf{1.0} & 0.1 & \textbf{1.5} & 1.3 & \textbf{2.8} & 1.2 & 1.3 \\
Hinge & 0.0 & 0.4 & 0.0 & 0.5 & 0.9 & 1.5 & 0.5 & 0.5 & \textbf{2.1} & 1.1 & 0.9 & 1.3 & 0.8 & 0.8 \\
\bottomrule
\end{tabular}
}
\end{table*}

\begin{table*}[h!]
\centering
    \tiny
    \caption{\textbf{MIAs on \NIDs for Pythia-6.9b.} The TPR @ 1\% FPR for MIAs between the \NIDs and the corresponding \GIDs on various subsets of the Pile dataset.
    }
    \label{tab:mia-6.9b-tpr}
\resizebox{\textwidth}{!}{
\begin{tabular}{lcccccccccccc|cc}
\toprule
& \multicolumn{2}{c}{\textbf{Full Pile}} & \multicolumn{2}{c}{\textbf{Github}} & \multicolumn{2}{c}{\textbf{StackExchange}} & \textbf{UbuntuIRC} & \textbf{Wikipediaen} & \textbf{PubMedCentral} & \textbf{HackerNews} & \textbf{Pile-CC} & \textbf{ArXiv} & \multicolumn{2}{c}{\textbf{Average}}  \\
\textbf{MIA} & Train & Test & Train & Test & Train & Test & Train & Train & Train & Train & Train  & Train & Train & Test \\
\midrule
Loss & 1.2 & 0.1 & 1.9 & 0.0 & 1.0 & 1.3 & 0.4 & 0.0 & 0.3 & 0.3 & 0.5 & 1.0 & 0.7 & 0.5 \\
Min-K\% & 1.1 & 0.1 & 1.6 & 0.0 & \textbf{1.3} & 0.7 & 0.5 & 1.0 & 0.9 & 0.3 & 1.0 & 1.3 & 1.0 & 0.3 \\
Min-K\%++ & 0.9 & 0.7 & 1.0 & 0.6 & 1.2 & 1.4 & 0.4 & 1.3 & 0.9 & 0.9 & 0.4 & 1.1 & 0.9 & 0.9 \\
ReCALL & 1.0 & 0.2 & 1.5 & 0.0 & 1.2 & 1.3 & \textbf{1.1} & 0.6 & 1.2 & 1.2 & \textbf{1.2} & \textbf{2.1} & 1.2 & 0.5 \\
ReCALL (Hinge) & \textbf{1.3} & \textbf{1.4} & \textbf{2.0} & \textbf{1.5} & 0.5 & \textbf{2.6} & 0.6 & \textbf{2.3} & \textbf{1.8} & \textbf{3.3} & 1.2 & 1.8 & 1.6 & 1.9 \\
Hinge & 0.0 & 0.3 & 0.0 & 0.5 & 0.8 & 1.2 & 0.7 & 0.3 & 1.2 & 1.0 & 0.7 & 0.9 & 0.6 & 0.7 \\
\bottomrule
\end{tabular}
}
\end{table*}

\begin{table*}[h]
    \centering
    \tiny
    \caption{\textbf{MIAs on \NIDs for Pythia-2.8b.} The TPR @ 1\% FPR for MIAs between the \NIDs and the corresponding \GIDs on various subsets of the Pile dataset.
    }
    \label{tab:mia-2.8b-tpr}
    \resizebox{\textwidth}{!}{
\begin{tabular}{lcccccccccccc|cc}
\toprule
& \multicolumn{2}{c}{\textbf{Full Pile}} & \multicolumn{2}{c}{\textbf{Github}} & \multicolumn{2}{c}{\textbf{StackExchange}} & \textbf{UbuntuIRC} & \textbf{Wikipediaen} & \textbf{PubMedCentral} & \textbf{HackerNews} & \textbf{Pile-CC} & \textbf{ArXiv} & \multicolumn{2}{c}{\textbf{Average}}  \\
\textbf{MIA} & Train & Test & Train & Test & Train & Test & Train & Train & Train & Train & Train  & Train & Train & Test \\
\midrule
Loss & 1.1 & 0.0 & 1.4 & 0.0 & 0.9 & 1.3 & 0.4 & 0.0 & 0.8 & 0.1 & 0.6 & 1.0 & 0.7 & 0.4 \\
Min-K\% & 1.1 & 0.0 & 1.2 & 0.0 & \textbf{1.1} & 1.4 & 0.4 & \textbf{1.1} & 1.1 & 0.3 & 0.7 & 0.7 & 0.8 & 0.5 \\
Min-K\%++ & 0.9 & 0.6 & 1.3 & 0.5 & 0.8 & 1.5 & 0.3 & 1.0 & \textbf{2.3} & 0.8 & 1.0 & 0.3 & 1.0 & 0.9 \\
ReCALL & 0.1 & 0.0 & 0.5 & 0.0 & 1.0 & 0.1 & 1.5 & 0.1 & 0.9 & 0.7 & \textbf{1.0} & \textbf{1.7} & 0.8 & 0.0 \\
ReCALL (Hinge) & \textbf{1.3} & \textbf{0.7} & \textbf{1.6} & \textbf{1.0} & 0.7 & 0.1 & \textbf{1.7} & 0.4 & 0.1 & \textbf{2.4} & 0.8 & 1.1 & 1.1 & 0.6 \\
Hinge & 0.1 & 0.4 & 0.1 & 0.4 & 0.8 & \textbf{1.5} & 0.4 & 0.2 & 1.5 & 1.2 & 0.9 & 0.9 & 0.7 & 0.8 \\
\bottomrule
\end{tabular}
}
\end{table*}

\begin{table*}[h]
    \centering
    \tiny
    \caption{\textbf{MIAs on \NIDs for OLMo 7B.} The TPR @ 1\% FPR for MIAs between the \NIDs and the corresponding \GIDs on various subsets of the Dolma dataset.
    }
    \label{tab:mia-olmo7b-tpr}
\begin{tabular}{lcccccccc|c}
\toprule
 & \multicolumn{8}{c}{\textbf{Dolma}} & \multicolumn{1}{c}{\textbf{Average}}  \\
\textbf{MIA} & C4 & PeS2o & MegaWika & ArXiv & RefinedWeb & algebraic stack & openwebmath & Proof Pile 2 (Test) & Train \\
\midrule
Loss & 0.4 & 0.9 & 0.4 & \textbf{1.2} & 0.8 & 0.9 & 0.3 & \textbf{0.0} & 0.7 \\
Min-K\% & 0.7 & 0.5 & \textbf{1.5} & 0.3 & 0.9 & 0.5 & 0.4 & 0.0 & 0.7 \\
Min-K\%++ & \textbf{1.1} & 0.8 & 0.2 & 0.8 & \textbf{2.0} & 0.3 & 0.9 & 0.0 & 0.9 \\
ReCALL & 0.7 & 0.6 & 0.6 & 0.7 & 0.7 & 0.9 & 0.6 & 0.0 & 0.7 \\
ReCALL (Hinge) & 0.7 & 0.3 & 1.1 & 0.2 & 0.2 & \textbf{1.1} & \textbf{2.2} & 0.0 & 0.8 \\
Hinge & 0.9 & \textbf{1.0} & 1.0 & 0.6 & 1.1 & 1.1 & 0.9 & 0.0 & 0.9 \\
\bottomrule
\end{tabular}
\end{table*}

\section{Further Experiments on DI}
\label{app:di_experiments}
We evaluate DI on various models and data subsets. More concretely, we experiment with Pythia models 12b, 6.9b, and 2.8b and OLMo-7B. Additionally, we investigate the impact of increasing the number of samples in the suspect set.
All results are summarized in \Cref{fig:di_pvalue_pythia}.

\begin{figure}[ht] %
    \centering
     \begin{subfigure}[b]{0.45\textwidth}
         \centering
         \includegraphics[width=\textwidth]{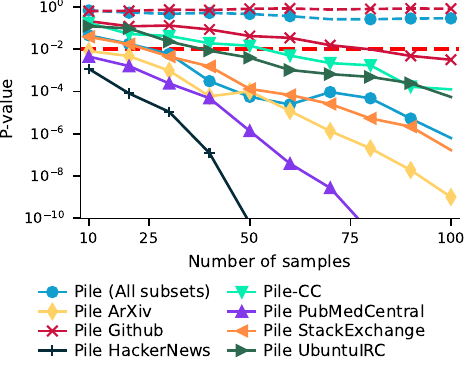}
         \caption{Pythia 12b}
     \end{subfigure}
     \begin{subfigure}[b]{0.45\textwidth}
         \centering
         \includegraphics[width=\textwidth]{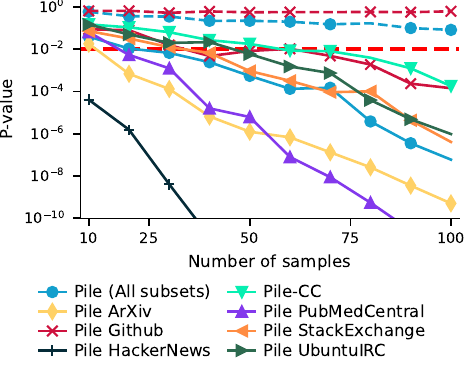}
         \caption{Pythia 6.9b}
     \end{subfigure}
     \begin{subfigure}[b]{0.45\textwidth}
         \centering
         \includegraphics[width=\textwidth]{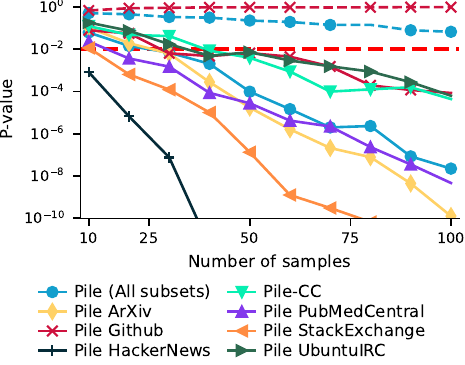}
         \caption{Pythia 2.8b}
     \end{subfigure}
     \begin{subfigure}[b]{0.45\textwidth}
    \centering
    \includegraphics[width=\textwidth]{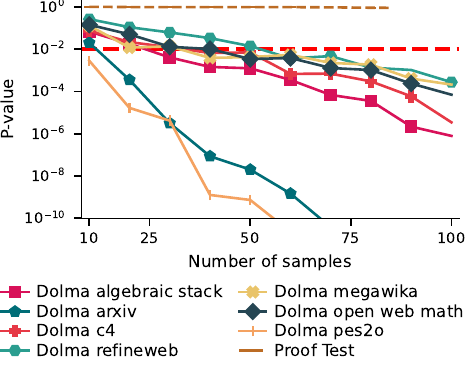}
    \caption{OLMo-7B}
    \label{fig:di_pvalue_olmo}
    \end{subfigure}
    \caption{The p-value for different Pythia models and OLMo on subsets of the Pile or Dolma datasets, respectively. We show results for different numbers of samples in the suspect set. For the Pythia models, the solid lines show the training subsets, while the dashed lines are for test subsets (not included in training). The Proof Pile 2 (Test) subset has fewer than 100 \NIDs. Hence, their lines are plotted only until the highest number of samples is available.
    We observe that for training sets, the p-values overall decrease with the number of samples, enabling the detection of the private data in the model's training set. 
    The test set's p-values are constant, suggesting that no false positives are achieved.
    }
    \label{fig:di_pvalue_pythia}
\end{figure}

\section{Controlled Ablation of DI}\label{app:controlled}
\reb{
In this section, we investigate how the main design choices affect the behavior of our method. To carry out this controlled analysis, it is necessary to train a model for each configuration under study. Fully training a large model for every variation is computationally infeasible, and therefore, following the procedure described in \Cref{sec:auditing}, we finetune a smaller model that serves as a practical proxy for evaluating the influence of individual components.
This controlled setup enables us to enforce the formatting rules of task-specific \NIDs with precision, ensuring that the experiments isolate structural properties rather than reflecting irregularities present in real-world data. The following subsections present the corresponding evaluations conducted within this framework.
}

\subsection{Comparison with Injected Canaries}
\label{app:injected_canaries}

\reb{
In this subsection, we compare the performance of \NIDs and commonly used injected canaries. Although injected canaries fall outside our post-hoc threat model, this controlled setting helps contextualize the strength of the \NID leakage relative to existing auditing methods.
In particular, we consider four types of canaries: (1) random alphabetic strings of length 32, (2) the \NIDs (from the GitHub subset of the Pile test set), (3) fully IID strings (in-distribution text from the Pile test set), and (4) random hexadecimal strings of length 32. For each type, we inject 100 canaries into the training data and run DI.
The resulting p-values for each canary type are reported in \Cref{tab:controlled-injected}.
Overall, we find that \NIDs perform competitively with other injected canaries. They capture privacy leakage more effectively than IID canaries and outperform random hexadecimal canaries, though some carefully crafted canaries, such as alphabetic strings, exhibit slightly stronger signals.
}

\begin{table}[H]

    \caption{\reb{\textbf{P-values for DI on Injected Canaries.} P-values obtained for each injected canary type.}}
    \label{tab:controlled-injected}
        \centering
    \small
\begin{tabular}{lr}
\hline
\textbf{Canary Type} & \textbf{P-Value} \\
\hline
Alphabetic & $<1.00\times10^{-300}$ \\
NIDs (All subsets)      & $4.17\times10^{-211}$ \\
NIDs (GitHub subset)       & $3.31\times10^{-156}$ \\
IID        & $4.55\times10^{-100}$ \\
Hex        & $7.00\times10^{-23}$ \\
\hline
\end{tabular}
\end{table}

\subsection{Misimplemented Generator}
\label{app:misimplementation}

\reb{
To study the benefits of our method for operating on held-out data from the same distribution, we analyze scenarios in which the held-out data are generated from a distribution that differs from the distribution of the suspect set data.
We evaluate the impact of an incorrectly implemented generator. If the \GID generator is not implemented properly, this induces a distributional shift between \NIDs and \GIDs. 
Starting from the correct generator, we construct three misimplemented variants that (1) flip the casing of alphabetic characters, (2) produce identifiers whose length is off by one, and (3) produce identifiers whose length is off by two. 
We then run DI on models finetuned on correct \NIDs, but evaluated using imperfect \GIDs, using the same protocol as in previous subsections.
\Cref{tab:controlled-wgenerator} reports the resulting p-values for member and non-member datasets.
The results show that DI is sensitive to certain generator failures: for example, incorrect casing yields strong signals for both members and non-members, substantially inflating false positives. In contrast, modest length mismatches have a smaller impact on non-member p-values, likely because Min-K\% and Min-K\%++ only depend on the top-k tokens and are therefore relatively insensitive to appending or removing a small number of additional tokens. This analysis complements our microanalysis of \NID formats and highlights that both structural differences and shifts in the identifier-generation distribution can meaningfully affect DI outcomes.
}

\begin{table}[H]
\caption{\reb{\textbf{Misimplemented Generator.} P-values for DI on member and non-member datasets under different \GID generator failures compared to the correct generator.}}
\label{tab:controlled-wgenerator}
\centering \small
\begin{tabular}{l r r}
\hline
\textbf{Generator Failure} & \textbf{P-Value Members} & \textbf{P-Value Non-Members} \\
\hline
Wrong Case        & $<1.00\times10^{-300}$ & $1.16\times10^{-54}$ \\
Length Off By 2   & $3.64\times10^{-99}$   & $5.39\times10^{-02}$ \\
Length Off By 1   & $<1.00\times10^{-300}$ & $3.40\times10^{-01}$ \\
Correct Generator           & $3.31\times10^{-156}$  & $9.83\times10^{-01}$ \\
\hline
\end{tabular}

\end{table}

\subsection{Impact of MIA Strength}
\label{app:strongmia}
\reb{
We next investigate how the strength of the underlying membership inference attack affects DI performance with \NIDs and, consequently, DP auditing. While our framework treats the MIA as a plug-in component, a stronger MIA signal should intuitively translate into more powerful DI tests. To validate this, we follow the controlled setup: we finetune Pythia-1b on 100 \NIDs from the GitHub test set and run DI with four MIA feature configurations. Specifically, we use (1) the original MIA feature set, (2) the original features augmented with CAMIA~\citep{chang2024context}, (3) the original features augmented with SURP~\citep{zhang2024adaptive}, and (4) the combination of original features, CAMIA, and SURP. \Cref{tab:controlled-mia-features} reports the resulting p-values.
We observe that incorporating stronger MIAs, particularly CAMIA, substantially improves DI effectiveness, as indicated by lower p-values, and that the trend is consistent: the richer the MIA feature set, the stronger the DI signal.
}

}
\begin{table}[H]
\caption{\reb{\textbf{MIA Strength.} P-values for DI when using different combinations of MIA feature sets, illustrating how stronger MIAs improve the DI signal.}}
\centering
\small
\begin{tabular}{l r}
\hline
\textbf{MIA Signal} & \textbf{P-Value} \\
\hline
Original Features + CAMIA            & $<1.00\times10^{-300}$ \\
Original Features + CAMIA + SURP     & $<1.00\times10^{-300}$ \\
Original Features + SURP             & $4.17\times10^{-211}$ \\
Original Features                    & $3.31\times10^{-156}$ \\
\hline
\end{tabular}

\label{tab:controlled-mia-features}

\end{table}

\subsection{Comparing Different Types of \NIDs}\label{app:controlled-nids_types}
\reb{
We also study how the structure of an identifier affects DI risk in a controlled experiment (see \Cref{tab:nids_size} for the exact formats). For each \NID format (Java serialization strings, SHA-512, SHA-256, SHA-1, MD5, and Ethereum addresses), we generate a set of identifiers that follow the corresponding pattern, finetune Pythia-1b on 100 instances of that type, and then run DI. \Cref{tab:controlled-nids_types} reports the resulting member p-values. Longer and more structurally complex identifiers, such as SHA-512 hashes, tend to yield stronger DI signals, whereas shorter formats such as MD5 hashes produce weaker but still highly significant results. Beyond length, the character composition also matters: Java serialization strings, which only contains digits, produce a stronger DI signal than SHA-512 despite being shorter. Overall, these results indicate that our framework is robust across a range of realistic \NID structures, with DI performance improving as identifiers become more informative and distinctive.
}
\begin{table}[H]
\caption{\reb{\textbf{\NID Structures.} P-values for DI for different \NID formats, showing how identifier length and structure influence the strength of the leakage signal.}}

\label{tab:controlled-nids_types}
\centering\small
\begin{tabular}{l r}
\hline
\textbf{NID Structure} & \textbf{P-Value} \\
\hline
Java Serialization     & $4.17\times10^{-211}$ \\
SHA512                 & $1.67\times10^{-175}$ \\
SHA1 / Ethereum Address & $1.95\times10^{-88}$ \\
SHA256                 & $8.89\times10^{-44}$ \\
MD5                    & $7.00\times10^{-23}$ \\
\hline
\end{tabular}
\end{table}

\subsection{Number of \NIDs}
\label{app:morenids}
\reb{
The number of \NIDs significantly affects the statistical power of the DI test. To study this effect, we finetune Pythia-1b on 100 \NIDs from the GitHub test set, and then run DI using only subsets of size $k \in \{50,60,70,80,90,100\}$ of these \NIDs. \Cref{tab:controlled-numbers} shows the resulting member p-values. As expected, the p-values decrease monotonically as the number of \NIDs increases, illustrating the sample-complexity benefit of having more identifiers available in the suspect dataset.
}

\begin{table}[H]
\centering\small
\caption{\reb{\textbf{Number of \NIDs.} P-values for DI as a function of the number of \NIDs used, demonstrating the sample-complexity benefit of having more identifiers.}}
\label{tab:controlled-numbers}
\begin{tabular}{l r}
\hline
\textbf{Number of \NIDs} & \textbf{P-Value} \\
\hline
50  & $1.01\times10^{-66}$  \\
60  & $7.92\times10^{-84}$  \\
70  & $2.07\times10^{-101}$ \\
80  & $2.23\times10^{-119}$ \\
90  & $1.16\times10^{-137}$ \\
100 & $3.31\times10^{-156}$ \\
\hline
\end{tabular}
\end{table}

\subsection{Task-Specific \NIDs}\label{app:controlled-gsm8k}
\reb{
In this subsection, we detail our case study on constructing task-specific \NIDs for GSM8K, which consists of grade-school math word problems that require multi-step numerical reasoning. For each selected GSM8K problem, we use GPT-5.1 to rewrite the problem as a numeric template by replacing every concrete number in the statement and solution with a variable; for example, the original solution fragment
``Natalia sold 48/2 = <<48/2=24>>24 clips in May. Natalia sold 48+24 = <<48+24=72>>72 clips altogether in April and May. \#\#\#\# 72''
is rewritten as the template
``Natalia sold $n/2$ clips in May. Natalia sold $n + n/2 = 3n/2$ clips altogether in April and May.'',
where $n$ stands for the original 48 and all derived quantities become functions of $n$. We then sample new values for these variables, update the problem text and solution accordingly, and have GPT-5.1 verify that each instantiated problem–solution pair is logically correct and self-consistent. In our DI setting, we treat one instance per template as the task-specific \NID in the suspect set and the remaining instantiated variants as the corresponding \GIDs drawn from the same task-specific distribution. 
To validate this approach, we finetune Pythia-1b on 100 such GSM8K-derived \NIDs and run DI on the resulting suspect sets; as shown in \Cref{tab:controlled-gsm8k} in the main paper, these task-specific \NIDs enable statistically significant DI on GSM8K. Moreover, the p-values decrease as the number of \NIDs increases, reflecting the expected strengthening of the DI signal with additional identifiers.
}

\subsection{Comparison with existing DI methods}
\reb{
To compare the effectiveness of our method, we not only compare the effectiveness of the individual canaries, but also the performance of the DI methods. For each DI method, we finetune Pythia-1b with the corresponding canaries and apply the corresponding statistical test.
Following \citet{maini2024llm}, we use IID samples and their corresponding statistical test. For \citet{zhang2024membership}, we use the Hex and Alphabetic random strings of length 32 and apply our statistical test, as their method lacks one. 
For \citet{zhao2025unlocking}, we still use the entire subset during the generation phase. While this gives an unfair advantage to the method by \citet{zhao2025unlocking}, it is necessary to prevent an even larger distribution shift in the resulting generated held-out set. Additionally, the reported time includes both generation and calibration. The generation time is measured in the pre-training setting, on four A100 GPUs, whereas all other experiments use a single A100 GPU. 
}

\reb{
In \Cref{tab:di_variants_comparison_github}, we report the results from the GitHub subset. We observe that for the member subsets, our method shows strong performance, with lower p-values than \citet{maini2024llm} and \citet{zhao2025unlocking}. For the non-member subsets, the p-values for all methods are close to 1.0. Notably, the execution time of our approach (21.52~minutes) is close to that of \citet{maini2024llm} and the implementation of the approach proposed by \citet{zhang2024membership}, yet substantially more efficient than \citet{zhao2025unlocking}.
}

\reb{
Additionally, in \Cref{tab:di_variants_comparison_whole}, we conduct a further evaluation using the whole Pile dataset. In this setting, we are unable to include \citet{zhao2025unlocking}, as the method relies on low distributional variability to function effectively. The results show a similar trend to that in the GitHub subset.
}

\begin{table}[h!]
\centering
\caption{\reb{\textbf{DI Comparison (GitHub Subset).} Comparison of DI methods including members/non-members p-values and execution time (in minutes) on the GitHub subset.}}
\label{tab:di_variants_comparison_github}
\resizebox{\textwidth}{!}{
\begin{tabular}{lrrr}
\hline
\textbf{DI Method} & \textbf{P-Value Members} & \textbf{P-Value Non-Members} & \textbf{Time} \\
\hline
LLM DI (\citet{maini2024llm}) & $9.79 \times 10^{-122}$ & $1.05 \times 10^{-2}$ & 20.43 \\
Unlock DI (\citet{zhao2025unlocking}) & $5.00 \times 10^{-5}$ & $1 \times 10^{0}$ & 2122.37 \\
\citet{zhang2024membership} (Hex) & $7.00 \times 10^{-23}$ & $6.70 \times 10^{-2}$ & 21.18 \\
\citet{zhang2024membership} (Alphabetic) & $<1.00 \times 10^{-300}$ & $5.42 \times 10^{-1}$ & 20.83 \\
NID DI (Ours) & $3.31 \times 10^{-156}$ & $9.83 \times 10^{-1}$ & 21.52 \\
\hline
\end{tabular}
}
\end{table}

\begin{table}[h!]
\centering
\caption{\reb{\textbf{DI Comparison (All Subsets).} Comparison of DI methods including members/non-members p-values and execution time (in minutes) with samples from the Pile.}}
\label{tab:di_variants_comparison_whole}
\resizebox{\textwidth}{!}{
\begin{tabular}{lrrr}
\hline
\textbf{DI Method} & \textbf{P-Value Members} & \textbf{P-Value Non-Members} & \textbf{Time} \\
\hline
LLM DI (\citet{maini2024llm}) & $8.48 \times 10^{-46}$ & $5.85 \times 10^{-1}$ & 20.73 \\
\citet{zhang2024membership} (Hex) & $7.00 \times 10^{-23}$ & $6.70 \times 10^{-2}$ & 21.18 \\
\citet{zhang2024membership} (Alphabetic) & $<1.00 \times 10^{-300}$ & $5.42 \times 10^{-1}$ & 20.83 \\
NID DI (Ours) & $4.17 \times 10^{-211}$ & $3.76 \times 10^{-1}$ & 20.67 \\
\hline
\end{tabular}
}
\end{table}

\section{Direct Comparison with \citet{zhao2025unlocking}}
\reb{
To further validate our DI method, we compare against \citet{zhao2025unlocking} in the pretrained settings. For fairness, we replicate their experimental setup, including the number of samples reported in Table~A2 of~\citep{zhao2025unlocking}. We evaluate three subsets of the Pile dataset using the Pythia-6.9b model to ensure coverage across various settings.
As shown in \Cref{tab:zhao_comparion}, our method achieves substantially better performance and efficiency, being more effective and orders of magnitude faster than the method of \citet{zhao2025unlocking}.
}

\begin{table}[ht]
\caption{\reb{\textbf{DI Comparison.} Comparison of p-values and end-to-end execution time (in minutes) per subset on Pythia-6.9b.}}
\label{tab:zhao_comparion}
\centering
\resizebox{\textwidth}{!}{
\begin{tabular}{lrrrr}
\hline
\textbf{Subset} & \textbf{P-Value~\citep{zhao2025unlocking}} & \textbf{P-Value (Ours)} & \textbf{Time~\citep{zhao2025unlocking}} & \textbf{Time (Ours)} 
 \\ \hline
Pile-CC & $5.64 \times 10^{-3}$ & $2.18 \times 10^{-34}$ & $1395.87$ & $46.17$ \\
GitHub & $8.50 \times 10^{-3}$ & $3.65 \times 10^{-14}$ & $2106.97$ & $34.41$ \\
Ubuntu & $4.23 \times 10^{-2}$ & $3.01 \times 10^{-14}$ & $805.22$ & $21.33$ \\
\hline
\end{tabular}
}
\end{table}

\section{Limitations}\label{app:limitations}
\reb{
Our method relies on datasets that contain \NIDs. While we have demonstrated that they are widespread, it is possible that not all types of \NIDs have been identified; future work may uncover more, which would only enhance our results by increasing the number of real samples. 
}

\section{LLM Usage}
We used LLMs solely to polish author-written text (grammar, clarity, concision). All suggestions were reviewed by the authors, who take full responsibility.

\end{document}